%% file: main.tex
\documentclass{article}
\usepackage{arxiv}
\usepackage[utf8]{inputenc}
\usepackage[T1]{fontenc}
\usepackage{microtype}
\usepackage[authoryear,round]{natbib}
\usepackage{amsmath,amssymb,amsthm,mathtools}
\usepackage{graphicx,booktabs,array,multirow}
\usepackage{placeins}
\usepackage{algorithm,algpseudocode}
\usepackage{xcolor}
\usepackage{hyperref,url}
\usepackage{tikz}
\usetikzlibrary{arrows.meta,positioning}
\usepackage[most]{tcolorbox}
\newtcolorbox{researchquestions}{
    enhanced,
    colback=black!3,
    colframe=black!65,
    boxrule=0pt,
    leftrule=1.2pt,
    sharp corners,
    left=9pt,
    right=9pt,
    top=7pt,
    bottom=7pt,
    before skip=9pt,
    after skip=9pt,
    fontupper=\normalsize
}
\newcommand{\E}{\mathbb E}
\newcommand{\R}{\mathbb R}
\newcommand{\norm}[1]{\lVert #1\rVert}
\newcommand{\ip}[2]{\langle #1,#2\rangle}
\newcommand{\Reg}{\mathrm{Regret}}
\newcommand{\D}{\mathcal D}
\newcommand{\U}{\mathcal U}

\newtheorem{assumption}{Assumption}
\newtheorem{lemma}{Lemma}
\newtheorem{proposition}{Proposition}
\newtheorem{theorem}{Theorem}

\title{Sample-Adaptive Zeroth-Order Post-Training}
\input{authors}
\date{}
\renewcommand{\shorttitle}{Sample-Adaptive Zeroth-Order Post-Training}
\hypersetup{pdftitle={Sample-Adaptive Zeroth-Order Post-Training},hidelinks}
\begin{document}
\maketitle
\begin{abstract}
% LYY ADD

Zeroth-order (ZO) optimization offers a memory-efficient alternative to backpropagation for fine-tuning large language models. However, existing ZO methods typically sample training examples uniformly, overlooking their evolving importance and leaving avoidable estimation variance. To address this limitation, we propose Sample-Adaptive Zeroth-Order Optimization (SAZO), which learns sample importance online and adaptively allocates queries to accelerate convergence. SAZO extracts gradient-norm information from existing directional differences to estimate sample importance without access to true gradients. Furthermore, to track evolving importance from sparse minibatch feedback, SAZO maintains per-example exponential moving averages that aggregate observations across updates. The method requires no additional model evaluations and only $O(N)$ additional scalar state compared with MeZO. We prove that SAZO asymptotically matches the leading convergence term of the \emph{minimum-variance} sampling oracle,  and achieves a strictly faster convergence rate than standard methods. Extensive experiments across four model families demonstrate improved average downstream performance and up to $6\times$ faster convergence to the same training loss in terms~of~query~count.
\end{abstract}

\input{sections/introduction}
\input{sections/related_work}
\input{sections/method}
\input{sections/theory}
\input{sections/experiments}
\input{sections/conclusion}

\clearpage
\section*{AI use statement}
Generative AI assisted with research organization, literature retrieval, mathematical derivation, manuscript drafting, LaTeX preparation, numerical identity checks, and a separate adversarial review of this draft. It did not generate the reported training measurements.
\bibliography{references}
\bibliographystyle{plainnat}
\clearpage
\appendix
\input{sections/appendix}

\end{document}

%% file: authors.tex
\author{
  Haiyuan Wan\textsuperscript{1}\quad
  Yanyi Li\textsuperscript{2}\quad
  Weicheng Lin\textsuperscript{2}\quad
  Cong Fang\textsuperscript{2}\thanks{Corresponding author:
    \href{mailto:fangcong@pku.edu.cn}{\texttt{fangcong@pku.edu.cn}}.}\\[4pt]
  \normalfont\small\textsuperscript{1}School of Physics, Peking University\\
  \normalfont\small\textsuperscript{2}School of Intelligence Science and Technology, Peking University
}
\hypersetup{pdfauthor={Haiyuan Wan, Yanyi Li, Weicheng Lin, Cong Fang}}

%% file: sections/introduction.tex
\section{Introduction}
% LYY ADD
Fine-tuning adapts pre-trained large language models (LLMs) to diverse downstream tasks, including sentiment analysis, natural language inference, and question answering \citep{devlin2019bert}. However, standard fine-tuning incurs substantial memory overhead from stored activations, gradients, and optimizer states \citep{li2026prac}, requiring up to $12\times$ the inference memory in the settings evaluated by \citet{malladi2023mezo}. Parameter-efficient fine-tuning (PEFT), including prefix tuning \citep{li2021prefix} and low-rank adaptation (LoRA) \citep{hu2021lora}, reduces trainable parameters but still stores activations for backpropagation. To overcome this remaining activation-memory bottleneck, zeroth-order (ZO) methods such as MeZO bypass backpropagation and estimate updates from two forward evaluations, achieving a memory footprint close to inference \citep{malladi2023mezo}. This memory advantage comes with an optimization challenge: the resulting gradient estimates remain noisy, limiting the convergence speed of ZO fine-tuning \citep{nesterov2017random}.

Inspired by first-order adaptive optimizers such as Adam \citep{kingma2015adam}, mainstream approaches to accelerating ZO optimization focus on adaptation in parameter space. HiZOO \citep{zhao2025hizoo} and PaZO \citep{zhao2026pazo} use Hessian-informed preconditioning, ZO-AdaMM \citep{chen2019zoadamm} and R-AdaZO \citep{shu2025refining} adapt coordinate-wise update scales using historical moment estimates, and LOZO \citep{chen2025lozo} exploits low-rank perturbation subspaces. However, ZO methods face a fundamental information bottleneck: each directional query reveals only a scalar projection rather than a full gradient. Identifying an unrestricted $d$-dimensional adaptation vector requires $\Omega(d)$ such observations, so the cost of learning reliable parameter-wise adaptation can offset its intended acceleration. We therefore ask:
% Mainstream approaches to accelerating ZO optimization focus on adaptation in parameter space. HiZOO \citep{zhao2025hizoo} and PaZO \citep{zhao2026pazo} use Hessian-informed preconditioning, ZO-AdaMM \citep{chen2019zoadamm} and R-AdaZO \citep{shu2025refining} adjust coordinate-wise update scales using historical moment estimates, and LOZO \citep{chen2025lozo} exploits low-rank perturbation subspaces. Yet favorable parameter-space geometry alone does not ensure low-variance stochastic updates, leaving a source of slow convergence that these geometric improvements do not fully address. We therefore ask:
\begin{researchquestions}
\noindent
\textbf{A.}\enspace
\textit{Is there additional information in ZO queries that current optimizers leave untapped?}

\par\medskip
\noindent
\textbf{B.}\enspace
\textit{If so, how can we adaptively exploit this information to accelerate ZO convergence?}
\end{researchquestions}

\begin{figure}[t]
\centering
\includegraphics[width=\linewidth]{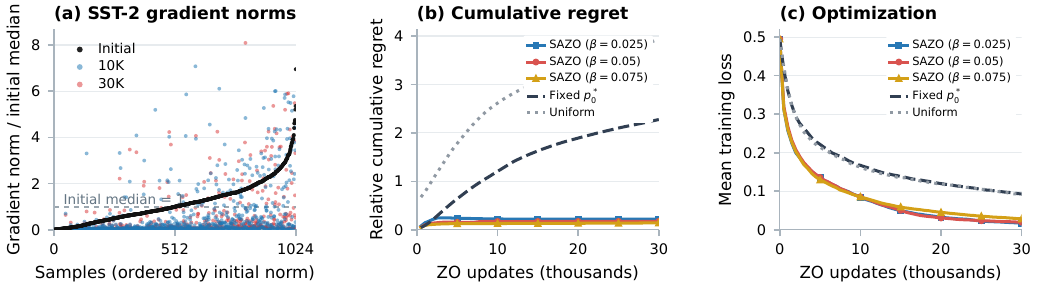}
\caption{\textbf{From sample heterogeneity to allocation regret and optimization.} (a) Per-example gradient norms on SST-2 at initialization, 10K, and 30K updates, normalized by the initial median. Points retain the sample order induced by the initial norms, showing both cross-sample heterogeneity and temporal changes. (b) Sampling regret is the excess gradient-estimator variance over the variance-minimizing allocation (see Appendix~\ref{app:regret-settings} for definitions and detailed settings); SAZO reduces this quantity versus uniform and fixed initial-oracle sampling. (c) The improved allocation accelerates loss reduction.}
\label{fig:intro-narrative}
\end{figure}

Since each update uses only a small subset of the training data, its effectiveness depends in part on which examples are selected. Our empirical analysis identifies an untapped source of information: the evolving importance of individual training examples. Their gradient norms differ substantially and change during training (Figure~\ref{fig:intro-narrative}(a); Appendix~\ref{app:heterogeneity}), suggesting that queries need not be equally valuable across~examples. Uniform sampling overlooks this heterogeneity, leaving avoidable
estimation variance. To use this information for query allocation, a natural approach is importance sampling, inspired by first-order methods \citep{katharopoulos2018importance,borsos2018vrb,elhanchi2020avare}: allocate more queries to examples with larger gradient norms. Although sampling heterogeneity also exists in first-order methods, we argue that zeroth-order algorithms are more sensitive to it, since the estimated gradient variance of zeroth-order methods using only function-value access is much larger than that of first-order methods\footnote{The variance is actually $\Omega(d)$ times larger in the worst case.}.

However, applying importance sampling principle to ZO fine-tuning presents two challenges.
\textbf{First, sample importance is not directly observable.} The variance-minimizing allocation depends on per-example gradient norms, $p_{i,t}^{\star}\propto\|\nabla f_i(x_t)\|$, whereas ZO queries reveal only random directional measurements. We therefore use the scaled magnitudes of existing directional differences to estimate these norms; their expectations approach the corresponding gradient norms as the perturbation radius vanishes, requiring no additional queries. \textbf{Second, evolving importance must be tracked from sparse minibatch feedback.} Each update reveals importance information only for examples in the sampled minibatch, while the importance of all examples may change as the model evolves. We therefore maintain per-example exponential moving averages to combine newly observed information with historical estimates, allowing importance tracking across intermittent observations. Uniform exploration provides opportunities to refresh infrequently sampled examples.

Together, these designs yield \emph{Sample-Adaptive Zeroth-Order Optimization (SAZO)}. The learned scores guide sampling, while inverse-probability weighting
preserves the expected full-data finite-difference update.
SAZO requires no additional model evaluations and only $O(N)$ scalar state for $N$ training examples. The benefit of tracking rather than fixing this allocation is visible in Figure~\ref{fig:intro-narrative}(b): SAZO incurs lower relative cumulative sampling regret than uniform sampling and an oracle allocation fixed at initialization. This improved allocation also translates into faster training-loss reduction in Figure~\ref{fig:intro-narrative}(c), connecting the recovered sample information to optimization progress.

Our main contributions are:
\begin{itemize}
    \item We propose SAZO, which estimates sample importance from existing directional differences without extra model evaluations and tracks its evolution from sparse minibatch feedback using per-example exponential moving averages, requiring only $O(N)$ scalar state. 

    \item We prove that SAZO asymptotically matches the leading convergence term of the minimum-variance sampling oracle, and achieves a strictly faster convergence rate than standard methods.

    \item We conduct extensive experiments across four model families, demonstrating improved average downstream performance at MeZO-like memory and per-update cost. SAZO achieves up to $6\times$ query-count speedup over MeZO at the same training loss.
\end{itemize}

%% file: sections/related_work.tex
\section{Related work}\label{sec:related}
\paragraph{Sample-adaptive first-order optimization.}
Online variance minimization learns importance distributions from sampled feedback \citep{bouchard2016sample,namkoong2017adaptive,salehi2017bandit}. Static-regret analyses treat feedback as general loss sequences \citep{borsos2018vrb,borsos2019mixtures}; subsequent work connects sampling to Adam \citep{liu2020adambs}, federated convergence \citep{zeng2024kvib,zhao2025clientsampling}, and joint convex optimization \citep{aolaritei2025optimal}. Avare exploits gradient continuity for dynamic-regret guarantees \citep{elhanchi2020avare}. SAZO addresses the ZO setting, where gradients are unavailable and directional feedback remains noisy, while accounting for the coupling between sampling and optimization.

\paragraph{Accelerating ZO post-training.}
MeZO establishes memory-efficient language-model fine-tuning using forward evaluations \citep{malladi2023mezo}. Subsequent approaches accelerate optimization through curvature-based preconditioning \citep{zhao2025hizoo,zhao2026pazo}, low-rank perturbations \citep{chen2025lozo}, variance reduction \citep{gautam2024variance}, and directional alignment \citep{ma2025dap}. Other methods improve update normalization \citep{dang2026fzoo}, memory-efficient optimizer adaptivity \citep{dbouk2026meazo}, or adaptive layer selection \citep{wang2026adalezo}. These studies focus on constructing parameter updates and allocating perturbations within the model, where learning reliable parameter-wise adaptation is intrinsically difficult in high dimensions; SAZO instead adapts query allocation across~examples.

%% file: sections/method.tex
\section{Sample-adaptive zeroth-order optimization}\label{sec:method}
Consider the finite-sum problem $\min_{x\in\R^d} F(x)=N^{-1}\sum_{i=1}^N f_i(x)$, where $f_i$ denotes the loss of training example $i$. Given a uniformly sampled index $I_t$ and an independent Gaussian direction $z_t\sim\mathcal N(0,I_d)$, ZO optimization uses the finite difference
\begin{equation}\label{eq:fd}
s_{i,t}=\frac{f_i(x_t+\mu_tz_t)-f_i(x_t-\mu_tz_t)}{2\mu_t}.
\end{equation}
to update $x_{t+1}=x_t-\eta s_{I_t,t}z_t$, with perturbation radius $\mu_t>0$ and step size $\eta>0$. Since $\E[z_tz_t^\top]=I_d$, the estimator satisfies $\E[s_{I_t,t}z_t\mid x_t]=\nabla F(x_t)+O(\mu_t^2)$ and is unbiased up to finite-difference error.
Let $g_{i,t}=\nabla f_i(x_t)$ and $a_{i,t}=\norm{g_{i,t}}$, and let $\E_t$ denote expectation conditional on the history before the current draws. The estimator's second moment~is
\begin{equation}\label{eq:uniform-second}
\E_t\norm{s_{I_t,t}z_t}^2=\frac{d+2}{N}\sum_{i=1}^N a_{i,t}^2+O(\mu_t^2).
\end{equation}
The $O(\mu_t^2)$ terms account for finite-difference error under the Hessian regularity conditions in Assumption~\ref{ass:smooth}; their coefficients may depend on the sampling probabilities and gradient norms (Appendix~\ref{app:fd}). The maximum perturbation radius is denoted by $\mu=\max_{t\le T}\mu_t$.

\subsection{Sample-adaptive updates}\label{sec:sample-adaptive-updates}
The update above assigns equal query weight to every example, as does standard first-order SGD \citep{katharopoulos2018importance}. Yet empirical evidence reveals pronounced sample heterogeneity in post-training: gradient norms vary substantially across examples and evolve during optimization (Figures~\ref{fig:intro-narrative}(a) and~\ref{fig:importance-dynamics}). This heterogeneity exhibits interpretable sample-level structure that cannot be explained simply by label class, sequence length, or vocabulary (Appendices~\ref{app:heterogeneity-controls} and~\ref{app:sample-cases}). These observations motivate SAZO (Algorithm~\ref{alg:single}), which uses importance sampling to assign different query weights across examples to reduce update variance and accelerate optimization.

\begin{algorithm}[t]
\caption{Sample-Adaptive Zeroth-Order Optimization}\label{alg:single}
\begin{algorithmic}[1]
\Require $x_1$, $m_{i,1}>0$, step budget $T$, radii $\mu_t>0$, learning rate $\eta>0$
\Require Exploration $\varepsilon\in(0,\tfrac12]$ and smoothing coefficient $\beta\in(0,1)$
\For{$t=1,\ldots,T$}
\State $p_{i,t}\gets(1-\varepsilon)\dfrac{m_{i,t}}{\sum_jm_{j,t}}+\dfrac{\varepsilon}{N}$, for $i=1,\ldots,N$. \Comment{Importance sampling}
\State Draw $I_t\sim p_t$ and fresh $z_t\sim\mathcal N(0,I_d)$.
\State $s_{I_t,t}\gets\dfrac{f_{I_t}(x_t+\mu_tz_t)-f_{I_t}(x_t-\mu_tz_t)}{2\mu_t}$.
\State $x_{t+1}\gets x_t-\eta\dfrac{s_{I_t,t}}{Np_{I_t,t}}z_t$. \Comment{Unbiased gradient estimator}
\State Copy $m_{t+1}\gets m_t$.
\State $m_{I_t,t+1}\gets(1-\beta)m_{I_t,t}+\beta\sqrt{\dfrac\pi2}|s_{I_t,t}|$. \Comment{Sparse update with EMA smoothing}
\EndFor
\end{algorithmic}
\end{algorithm}

% Temporarily excluded; artwork and source remain in figures/.
% \begin{figure}[t]
% \centering
% \includegraphics[width=\linewidth]{figures/sazo_method_overview.pdf}
% \caption{Structure of the of SAZO. The inner importance-tracking loop reallocates queries for the outer optimization loop, whose model updates in turn change sample importance.}\label{fig:loop}
% \end{figure}

Let $p_{i,t}>0$ denote the sampling probability of example $i$, with $\sum_i p_{i,t}=1$. For $I_t\sim p_t$ drawn independently of $z_t$, the importance-weighted update is
\begin{equation}\label{eq:updates}
\widehat g_t=\frac{s_{I_t,t}}{Np_{I_t,t}}z_t,
\qquad x_{t+1}=x_t-\eta\widehat g_t.
\end{equation}
Importance weighting preserves the expected full-data ZO update, with $\E_t\widehat g_t=\nabla F(x_t)+O(\mu_t^2)$. Its second moment~is
\begin{equation}\label{eq:method-second}
\E_t\norm{\widehat g_t}^2
=\frac{d+2}{N^2}\sum_{i=1}^N\frac{a_{i,t}^2}{p_{i,t}}+O(\mu_t^2).
\end{equation}
Under uniform sampling, $p_{i,t}=1/N$, this second moment reduces to Eq.~\eqref{eq:uniform-second}. For positive gradient norms, the distribution that minimizes the leading term and the resulting second moment are
\begin{equation}\label{eq:method-oracle}
p_{i,t}^{\star}=\frac{a_{i,t}}{\sum_j a_{j,t}},
\qquad
\left.\E_t\norm{\widehat g_t}^2\right|_{p_t=p_t^\star}
=\frac{d+2}{N^2}\left(\sum_{i=1}^N a_{i,t}\right)^2+O(\mu_t^2).
\end{equation}

\subsection{Estimating sample importance from directional differences}
The ideal allocation in Eq.~\eqref{eq:method-oracle} requires the current per-example gradient norms. As identified in the Introduction, SAZO addresses two obstacles to learning them in ZO optimization.

\textbf{Unobserved gradients.} ZO optimization cannot access $a_{i,t}=\norm{\nabla f_i(x_t)}$, but the directional difference $s_{i,t}$ already computed for the model update is approximately a one-dimensional Gaussian projection of the gradient. Its scaled absolute value therefore gives a gradient-norm estimator that is unbiased up to finite-difference error, without additional~queries:
\begin{equation}\label{eq:method-observation}
Y_{i,t}=\sqrt{\frac\pi2}|s_{i,t}|,
\qquad \E_tY_{i,t}=a_{i,t}+O(\mu_t^2).
\end{equation}

\textbf{Sparse feedback.} Each update reveals $Y_{i,t}$ only for the sampled example, even though the importance of every example may evolve as optimization proceeds. SAZO therefore maintains a separate exponential moving average for every example, carrying historical estimates across the intervals between observations:
\begin{equation}\label{eq:ema}
m_{i,t+1}=\begin{cases}
(1-\beta)m_{i,t}+\beta Y_{i,t},&i=I_t,\\
m_{i,t},&i\ne I_t.
\end{cases}
\end{equation}
A larger $\beta$ responds faster to changing importance, while a smaller $\beta$ places more weight on accumulated observations and suppresses directional noise.

Finally, SAZO normalizes these smoothed estimates and adds uniform exploration, following the exploration--exploitation principle in EXP3 \citep{auer2002nonstochastic}:
\begin{equation}\label{eq:prob}
p_{i,t}=(1-\varepsilon)\frac{m_{i,t}}{\sum_jm_{j,t}}+\frac\varepsilon N.
\end{equation}
Exploration refreshes stale scores, while the learned distribution guides the next queries. This creates a coupled estimation--optimization loop, analyzed under Assumption~\ref{ass:smooth} in Section~\ref{sec:theory}.

\textbf{Query and storage cost.} SAZO reuses the directional differences already computed for the model update and therefore requires no additional ZO queries. It stores only one scalar importance estimate per example, for $O(N)$ additional space; with 1,000 training examples and FP64 scores, this is 8,000 bytes, or less than 8 KiB, and is negligible relative to the inference-memory footprint measured in Section~\ref{sec:efficiency}.

\textbf{Initialization and minibatch implementation.} We initialize each $m_i$ by averaging its directional-magnitude observations over four complete, stable-order burn-in passes without replacement, using ordinary MeZO updates; these updates and queries count toward the training budget. We then switch to adaptive sampling with replacement and EMA tracking. Each minibatch shares one perturbation direction, as in MeZO \citep{malladi2023mezo}; repeated draws are evaluated once, with their multiplicities retained in the weighted update (Appendix~\ref{app:batch}).

\textbf{Recommended hyperparameters.} We recommend $\beta=0.05$ and $\varepsilon=0.1$, which are used for all main SAZO results. This single setting performs robustly across tasks and datasets and across masked and autoregressive models of different scales; the controlled sweeps in Section~\ref{sec:beta-sensitivity} further show low sensitivity over broad ranges of both parameters.

%% file: sections/theory.tex
\subsection{Convergence analysis}\label{sec:theory}
This section analyzes convergence under the following continuity assumptions. Smoothness conditions of this type are standard in zeroth-order optimization and are routinely used to analyze ZO fine-tuning of language models \citep{nesterov2017random,malladi2023mezo}. They are also well matched to our setting: Lipschitz gradients enable descent and stable tracking as the parameters evolve, while Lipschitz Hessians control the error of SAZO's symmetric finite~differences.

\begin{assumption}\label{ass:smooth}
Each $\nabla f_i$ is globally $L_i$-Lipschitz, and $F$ is $L_F$-smooth and bounded below by $F_*$. Each Hessian is globally $H_i$-Lipschitz. The positive radii $\mu_t$ are deterministic, and the initialization and finite-time moments appearing below are finite. Let $L_2^2=N^{-1}\sum_i L_i^2$.
\end{assumption}

At the step-$t$ parameters, ignoring finite-difference error, the optimal allocation has second moment $V_t^\star=\frac{d+2}{N^2}\left(\sum_{i=1}^N a_{i,t}\right)^2$. Under the sampling distribution $p_t$ in Algorithm~\ref{alg:single}, the corresponding second moment~is $V_t(p_t)=\frac{d+2}{N^2}\sum_{i=1}^N\frac{a_{i,t}^2}{p_{i,t}}$. Their cumulative difference,
\begin{equation}\label{eq:regret-definition}
\Reg_T=\sum_{t=1}^T\E\left[V_t(p_t)-V_t^\star\right]\ge0,
\end{equation}
satisfies the following bound.
\begin{theorem}[Sampling regret]\label{thm:tracking}
Under Assumption~\ref{ass:smooth}, run Algorithm~\ref{alg:single} with fixed $x_1$ and constant $\eta,\varepsilon,\beta$ within the horizon. Let $\mu=\max_{t\le T}\mu_t\le1$, $\sigma_G^2=\pi/2-1$, $E_0=\E\norm{m_1-a_1}^2$, $r=\beta\varepsilon/N$, $\Lambda_d=(d+2)L_2^2$, and
\begin{equation}\label{eq:complexities}
A_T=\frac1T\sum_{t=1}^T\sum_i\E a_{i,t}^2,\qquad
\D_T=\frac1T\sum_{t=1}^T\E V_t^\star.
\end{equation}
If $\kappa=192\Lambda_d\eta^2/(\varepsilon r^2)<1$,~then
\begin{equation}\label{eq:main-regret}
\frac{\Reg_T}{T}\le\frac1{1-\kappa}\left[
\frac{16(d+2)E_0}{TN\varepsilon r}
+\frac{24(d+2)\sigma_G^2\beta^2}{N\varepsilon r}A_T
+\left(\kappa+4\varepsilon\right)\D_T\right]+O(\mu^4).
\end{equation}
\end{theorem}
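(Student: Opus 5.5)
The plan is to reduce the per-step regret $V_t(p_t)-V_t^\star$ to a tracking error $\norm{m_t-a_t}^2$, and then control that error by a one-step contraction recursion driven by the sparse EMA update.

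\textbf{Step 1: per-step regret in terms of tracking error.} Write $S_t=\sum_j a_{j,t}$, $M_t=\sum_j m_{j,t}$ and $q_{i,t}=m_{i,t}/M_t$, so that $p_t=(1-\varepsilon)q_t+\varepsilon/N$. Convexity of $p\mapsto a^2/p$ bounds the exploration cost by roughly $\varepsilon V_t^\star$ relative to sampling with $q_t$. For the exploitation part, use the identity
\begin{equation*}
\sum_i \frac{a_i^2}{p_i}-S^2=\sum_i\frac{(a_i-Sp_i)^2}{p_i}.
\end{equation*}
Since $p_{i,t}\ge\varepsilon/N$, this gives $V_t(p_t)-V_t^\star\le\frac{(d+2)}{N\varepsilon}\sum_i(a_{i,t}-S_tp_{i,t})^2$. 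Next decompose $a_i-Sp_i=(1-\varepsilon)(a_i-Sq_i)+\varepsilon(a_i-S/N)$. Compare $Sq_i=Sm_i/M$ with $m_i$ to get $\sum_i(a_i-Sm_i/M)^2\le 4\norm{m-a}^2$, using $|S-M|\le\sqrt N\norm{m-a}$ together with $\sum_i m_i^2/M^2\le 1$. The exploration term is absorbed into the $4\varepsilon\D_T$ contribution. The target is therefore
\begin{equation*}
\E[V_t(p_t)-V_t^\star]\le \frac{c(d+2)}{N\varepsilon}\E\norm{m_t-a_t}^2+2\varepsilon\,\E V_t^\star+O(\mu^4).
\end{equation*}
The $O(\mu^4)$ enters from squaring the $O(\mu^2)$ bias in Eq.~\eqref{eq:method-observation}.

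\textbf{Step 2: tracking recursion.} Set $e_t=\E\norm{m_t-a_t}^2$. Coordinate $i$ is updated with probability $p_{i,t}\ge\varepsilon/N$, and by Eq.~\eqref{eq:method-observation} the observation $Y_{i,t}$ has mean $a_{i,t}+O(\mu^2)$ and variance $\sigma_G^2a_{i,t}^2+O(\mu^2)$, since $|\langle g,z\rangle|$ is a half-normal variable scaled by $\norm g$. Then
\begin{equation*}
\E_t(m_{i,t+1}-a_{i,t})^2\le(1-p_{i,t}(2\beta-\beta^2))(m_{i,t}-a_{i,t})^2+p_{i,t}\beta^2\sigma_G^2a_{i,t}^2+O(\mu^2)\text{-terms}.
\end{equation*}
This yields a contraction factor of at most $1-r$, with $r=\beta\varepsilon/N$.

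Drift is handled separately. Since $|a_{i,t+1}-a_{i,t}|\le L_i\eta\norm{\widehat g_t}$, we get $\sum_i\E_t(a_{i,t+1}-a_{i,t})^2\le NL_2^2\eta^2\E_t\norm{\widehat g_t}^2=NL_2^2\eta^2V_t(p_t)+O(\mu^2)$. Combine the two pieces with Young's inequality $(u+v)^2\le(1+r/2)u^2+(1+2/r)v^2$:
\begin{equation*}
e_{t+1}\le(1-r/2)e_t+\beta^2\sigma_G^2\sum_i\E p_{i,t}a_{i,t}^2\cdot c+\frac{3NL_2^2\eta^2}{r}\E V_t(p_t)+\cdots.
\end{equation*}
Replace $p_{i,t}\le1$ for the noise term to produce $A_T$. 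Summing over $t$ gives
\begin{equation*}
\sum_te_t\le\frac{2e_1}{r}+\frac{c\,\beta^2\sigma_G^2}{r}TA_T+\frac{6NL_2^2\eta^2}{r^2}\sum_t\E V_t(p_t).
\end{equation*}

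\textbf{Step 3: close the loop.} Insert the bound from Step 2 into Step 1 and write $\E V_t(p_t)=\E V_t^\star+(\text{regret})_t$. The drift term then contributes $\frac{c'(d+2)L_2^2\eta^2}{\varepsilon r^2}(\Reg_T+T\D_T)$, which is $\kappa(\Reg_T+T\D_T)$ once the numerical constants are tracked to give 192. Moving $\kappa\Reg_T$ to the left-hand side and dividing by $1-\kappa$ yields Eq.~\eqref{eq:main-regret}, with the three terms $E_0$, $A_T$ and $(\kappa+4\varepsilon)\D_T$.

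\textbf{Main obstacle.} The hardest part is this self-referential coupling: the drift of $a_t$ depends on $\widehat g_t$, whose magnitude depends on the regret being bounded. It requires conditioning carefully so that $I_t$, $z_t$, the EMA noise and the drift are handled within the same filtration; in particular, $m_{t+1}-a_{t+1}$ uses the same $z_t$ as the update. A further delicate point is keeping the $O(\mu^2)$ bias cross terms at order $\mu^4$ rather than $\mu^2$ after Young's inequality. My first attempt would be to absorb these cross terms into the contraction slack $r/2$.
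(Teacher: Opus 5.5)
Your architecture matches the paper's. A normalization step turns $V_t(p_t)-V_t^\star$ into $\norm{m_t-a_t}^2$ plus an exploration cost. A sparse-EMA contraction with Young's inequality at parameter $r/2$ (Lemma~\ref{lem:sparse}) controls tracking. Drift is bounded through the $L_i$ and the update second moment, and $\kappa\Reg_T$ is absorbed. Steps 2 and 3 are sound in structure.

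The gap is in Step 1. Both of its pieces lose a factor of order $N$, so the chain does not give the stated bound.

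\textbf{The tracking piece.} The claim $\sum_i(a_i-Sm_i/M)^2\le4\norm{m-a}^2$ is false. Write $a_i-Sq_i=-e_i+(M-S)q_i$. You have $(M-S)^2\le N\norm e^2$, but $\sum_iq_i^2$ can be close to $1$. So only $(2+2N)\norm e^2$ holds in general. For example, $m\approx(1,0,\dots,0)$ and $a=m+c\mathbf 1$ give $\sum_i(a_i-Sq_i)^2\approx(N-1)\norm{m-a}^2$.

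\textbf{The exploration piece.} Once you apply $p_i\ge\varepsilon/N$ to the whole sum, this piece becomes $\frac{(d+2)\varepsilon}{N}\sum_i(a_i-\bar a_t)^2=\varepsilon\bigl(V_t(p^{\mathrm{unif}})-V_t^\star\bigr)$. That is a heterogeneity term, not $O(\varepsilon V_t^\star)$. For $a_t=(1,0,\dots,0)$ and $m_t\to a_t$ it equals $(N-1)/4$ times $4\varepsilon V_t^\star$, although the true gap is only about $\varepsilon V_t^\star$.

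Carried through, these two losses inflate the $E_0$, $A_T$ and $\kappa$ coefficients by roughly $N$. They also replace $4\varepsilon\D_T$ by a term in $\U_T-\D_T$. Convexity of $p\mapsto a^2/p$ does not rescue this. The inequality $V(p)\le(1-\varepsilon)V(q)+\varepsilon V(p^{\mathrm{unif}})$ produces the same heterogeneity term. And $V(q)-V^\star$ is not controlled by $\norm e^2$, because $q$ has no floor.

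\textbf{The missing idea.} Stay in the $p^{-1}$-weighted norm, $N^2[\widetilde V(p)-\bar a^2]=\sum_i(a_i-Sp_i)^2/p_i$. Decompose $a-Sp=-e+(M-S)q+\varepsilon S(q-u)$ with $u=\mathbf 1/N$. Use the floor $p_i\ge\varepsilon/N$ only on $\sum_ie_i^2/p_i\le(N/\varepsilon)\norm e^2$. For every multiple of $q$, use $p_i\ge(1-\varepsilon)q_i$, i.e.\ $\sum_iq_i^2/p_i\le1/(1-\varepsilon)$; your remark about $1/p_i\le1/((1-\varepsilon)q_i)$ is the right tool, applied termwise here. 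This gives $(M-S)^2\sum_iq_i^2/p_i\le N\norm e^2/(1-\varepsilon)$ and $\varepsilon^2\sum_i(q_i-u_i)^2/p_i=\sum_iq_i^2/p_i-1\le\varepsilon/(1-\varepsilon)$. Hence you get Lemma~\ref{lem:normalization} with constants $8/(N\varepsilon)$ and $4\varepsilon$.

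\textbf{The finite-difference remainder.} To get an $O(\mu^4)$ rather than $O(\mu^2)$ remainder, avoid $\E_t\norm{\widehat g_t}^2=V_t(p_t)+O(\mu^2)$ and the $O(\mu^2)$ variance expansion of $Y_{i,t}$; both leave $O(\mu^2)$ terms. Instead compare with the ideal Gaussian quantities at the same index and direction. Bound $\norm{\widehat g_t}^2\le2\norm{\widehat g_t^0}^2+2\norm{h_t}^2$; this factor $2$ is why $\kappa$ carries $192=8\cdot12\cdot2$. In the EMA recursion, put $\beta O_i(Y_i-Y_i^0)$ into the Young remainder, where its expected square is $O(\beta^2\mu^4)$.
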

The bound separates initialization, observation noise, target drift, and exploration, with finite differences contributing an additive remainder. The condition $\kappa<1$ requires the feedback amplification of tracking error through sampling and parameter updates to be weaker than the error contraction induced by the EMA.
Theorem \ref{thm:tracking} shows that our importance tracking method can asymptotically achieve the minimum sampling variance. 

Substituting this result into the descent lemma \citep[Lemma~1.2.3]{nesterov2004introductory} yields the following convergence bound.

\begin{theorem}[Convergence]\label{cor:rate}
Under Assumption~\ref{ass:smooth}, fix $N,d,L_i$ and positive schedule coefficients; use
\[
\eta=c_\eta T^{-1/2},\qquad
\varepsilon=c_\varepsilon T^{-1/10},\qquad
\beta=c_\beta T^{-3/10}.
\]
Let $\Delta_F=F(x_1)-F_*$ and define $K_T=\frac{(d+2)\sigma_G^2c_\beta}{c_\varepsilon^2}A_T+\left(\frac{N^2\Lambda_dc_\eta^2}{c_\varepsilon^3c_\beta^2}+c_\varepsilon\right)\D_T$.
For $\Delta_F,\D_T>0$, take $c_\eta=\sqrt{2\Delta_F/(L_F\D_T)}$, which minimizes the leading expression for fixed $\D_T$. If the parameters are feasible and $\kappa\le1/2$,~then
\begin{equation}\label{eq:rate-explicit}
\frac1T\sum_{t=1}^T\E\norm{\nabla F(x_t)}^2\le\sqrt{\frac{2L_F\Delta_F\D_T}{T}}
+O\left(\frac{L_Fc_\eta K_T}{T^{3/5}}+\frac{L_Fc_\eta(d+2)E_0}{c_\varepsilon^2c_\beta T}\right)+O(\mu^2).
\end{equation}
The first $O(\cdot)$ hides only universal numerical constants; finite-difference coefficients are detailed in Appendix~\ref{app:fd}.
\end{theorem}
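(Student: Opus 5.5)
We plan to combine the descent lemma with Theorem~\ref{thm:tracking}. Since $F$ is $L_F$-smooth, the update \eqref{eq:updates} gives
\[
F(x_{t+1})\le F(x_t)-\eta\ip{\nabla F(x_t)}{\widehat g_t}+\tfrac{L_F\eta^2}{2}\norm{\widehat g_t}^2 .
\]
We take $\E_t$ of both sides. The importance weighting and the independence of $I_t$ and $z_t$ give $\E_t\widehat g_t=\nabla F(x_t)+O(\mu_t^2)$, and \eqref{eq:method-second} gives $\E_t\norm{\widehat g_t}^2=V_t(p_t)+O(\mu_t^2)$. The bias enters the inner product as $\eta\norm{\nabla F(x_t)}O(\mu_t^2)$. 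We bound this by $\tfrac{\eta}{2}\norm{\nabla F(x_t)}^2+O(\eta\mu_t^4)$ using Young's inequality, which leaves a factor $\eta/2$ on the gradient term. We then take full expectations and telescope from $t=1$ to $T$, using $F(x_{T+1})\ge F_*$. This yields
\[
\frac1T\sum_t\E\norm{\nabla F(x_t)}^2\le\frac{2\Delta_F}{\eta T}+\frac{L_F\eta}{T}\sum_t\E V_t(p_t)+O(\mu^2).
\]
The constant in front of the first term has to be tracked so that the leading term comes out as stated. If the factor $1/2$ lost to Young is too costly, we would instead write $\E_t\widehat g_t=\nabla F(x_t)+b_t$ and absorb $b_t$ directly, so that the gradient coefficient stays close to $\eta$.

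Next we split $\sum_t\E V_t(p_t)=T\D_T+\Reg_T$ and insert \eqref{eq:main-regret}. With $\kappa\le1/2$ we have $1/(1-\kappa)\le2$. Substituting the schedules gives $r=c_\beta c_\varepsilon T^{-2/5}/N$, so that $N\varepsilon r=c_\varepsilon^2c_\beta T^{-1/2}$. We read off each term.
\begin{itemize}
\item The initialization term is $O\big((d+2)E_0T^{-1/2}/(c_\varepsilon^2c_\beta)\big)$.
\item The noise term is $O\big((d+2)\sigma_G^2\beta^2A_T/(N\varepsilon r)\big)=O\big((d+2)\sigma_G^2c_\beta A_T\,T^{-1/10}/c_\varepsilon^2\big)$.
\item Since $\kappa=192\Lambda_d\eta^2/(\varepsilon r^2)=192N^2\Lambda_dc_\eta^2T^{-1/10}/(c_\varepsilon^3c_\beta^2)$, the drift and exploration term $(\kappa+4\varepsilon)\D_T$ is $O\big((N^2\Lambda_dc_\eta^2/(c_\varepsilon^3c_\beta^2)+c_\varepsilon)\D_T\,T^{-1/10}\big)$.
\end{itemize}
Hence $\Reg_T/T\le O(K_TT^{-1/10})+O\big((d+2)E_0T^{-1/2}/(c_\varepsilon^2c_\beta)\big)+O(\mu^4)$. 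The schedule exponents are exactly the ones that balance these three terms at the same order $T^{-1/10}$.

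Multiplying by $L_F\eta=L_Fc_\eta T^{-1/2}$ gives the remainders $L_Fc_\eta K_TT^{-3/5}$ and $L_Fc_\eta(d+2)E_0/(c_\varepsilon^2c_\beta T)$. The leading part is $2\Delta_F/(c_\eta\sqrt T)+L_Fc_\eta\D_T/\sqrt T$, up to the constant fixed above. Minimizing over $c_\eta$ gives $c_\eta=\sqrt{2\Delta_F/(L_F\D_T)}$ and the value $\sqrt{2L_F\Delta_F\D_T/T}$, matching \eqref{eq:rate-explicit}. The finite-difference pieces collect into $O(\mu^2)$.

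The main obstacle is that $\D_T$ and $A_T$ depend on the trajectory, and hence on $c_\eta$ itself. Optimizing $c_\eta$ ``for fixed $\D_T$'' is therefore only a statement about the bound's expression, which is how the theorem phrases it. We must also ensure that feasibility ($\kappa\le1/2$, $\varepsilon\le1/2$, $\beta<1$) holds for the chosen constants; both are taken as hypotheses.
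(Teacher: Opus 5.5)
Your overall architecture is the paper's: descent lemma, the split $\sum_t\E V_t(p_t)=T\D_T+\Reg_T$, Theorem~\ref{thm:tracking} with $1/(1-\kappa)\le2$, substitution of the schedules, and optimization of $c_\eta$. Your schedule bookkeeping is also correct: $N\varepsilon r=c_\varepsilon^2c_\beta T^{-1/2}$, $\kappa\propto T^{-1/10}$, and the three $T^{-1/10}$ pieces assemble into $K_T$. The gap is the leading constant. Your fixed-weight Young step leaves only $\tfrac{\eta}{2}\norm{\nabla F(x_t)}^2$. Your leading expression $2\Delta_F/(c_\eta\sqrt T)+L_Fc_\eta\D_T/\sqrt T$ is therefore exactly twice the paper's $\Delta_F/(c_\eta\sqrt T)+L_Fc_\eta\D_T/(2\sqrt T)$. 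Doubling both terms leaves the minimizer unchanged, which is why you recover the stated $c_\eta=\sqrt{2\Delta_F/(L_F\D_T)}$. It does, however, double the minimum: at that $c_\eta$ your expression equals $2\sqrt{2L_F\Delta_F\D_T/T}$, not $\sqrt{2L_F\Delta_F\D_T/T}$. The claimed ``match'' in your final step is thus an arithmetic slip. Since \eqref{eq:rate-explicit} states the leading term with an exact constant outside the $O(\cdot)$, your argument as written proves only a bound twice as large.

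Your fallback of keeping the gradient coefficient at $\eta$ is the right repair, but it is exactly the step that needs an argument. The bias must be controlled without giving away any part of $\eta\norm{\nabla F(x_t)}^2$. The paper keeps $-\eta\norm{\nabla F(x_t)}^2$ intact and bounds the cross term by $\eta\norm{\nabla F(x_t)}\sqrt{W_t}$, where $\norm{\E_t\widehat g_t-\nabla F(x_t)}^2\le W_t=H_2^2D_8\mu_t^4/36$. After averaging, it applies Cauchy--Schwarz over time and randomness together with $\norm{\nabla F(x_t)}\le\bar a_t$, i.e.\ $\Psi_T\le\widetilde\D_T$. This term then costs only $\sqrt{\widetilde\D_T\overline W_T}=O(\mu^2)$. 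The result is Lemma~\ref{thm:outer}, $\Psi_T\le\Delta_F/(\eta T)+\tfrac{C_d\eta}{2}(\widetilde\D_T+\widetilde\Reg_T/T)+O(\mu^2)$, and optimizing it over $c_\eta$ gives exactly $\sqrt{2L_F\Delta_F\D_T/T}$. Alternatively, Young with a radius-dependent weight $\theta\asymp\mu^2$ instead of $1$ would work; the resulting factor $1+O(\mu^2)$ on the leading term can be moved into the $O(\mu^2)$.

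You were right to use the tight moment identity rather than the coarse bound of Proposition~\ref{prop:moments}, whose factor $2$ would likewise spoil the constant. Its remainder, however, is of order $\sqrt{V_t(p_t)W_t/\varepsilon}+W_t/\varepsilon$ and carries $1/\varepsilon\propto T^{1/10}$. The regret remainder $R_{\mu,T}$ also grows like $T^{3/10}\mu^4$ under the schedules. So ``collect into $O(\mu^2)$'' requires checking that the factor $\eta=c_\eta T^{-1/2}$ offsets these growths uniformly in $T$, as the paper does in Appendix~\ref{app:fd}.
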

The first term is the leading optimization term; under fixed problem and schedule coefficients and uniformly bounded trajectory moments, the adaptation correction is $o(T^{-1/2})$. Here $\D_T$ is the expected horizon average of $V_t^\star$ defined above. The choice of $c_\eta$ is conditional on this trajectory-dependent quantity; proofs and parameter qualifications appear in Appendices~\ref{app:proofs} and~\ref{app:balance}.

Table~\ref{tab:oracle-comparison} compares the leading convergence terms for uniform sampling (MeZO), the best fixed allocation in hindsight, and the optimal allocation at each step, which governs the SAZO bound. With $p^{\mathrm{unif}}=(1/N,\ldots,1/N)$ and $\Delta_N$ the probability simplex, define
\[
\U_T=\frac1T\sum_t\E V_t(p^{\mathrm{unif}})=\frac{d+2}{N}A_T,\qquad
\mathcal H_T=\E\left[\inf_{p\in\Delta_N}\frac1T\sum_t V_t(p)\right].
\]
Cauchy--Schwarz and minimization over the allocations give $\U_T\ge\mathcal H_T\ge\D_T$, and hence
\begin{equation}\label{eq:main-oracle-chain}
\sqrt{\frac{2L_F\Delta_F\U_T}{T}}
\ge\sqrt{\frac{2L_F\Delta_F\mathcal H_T}{T}}
\ge\sqrt{\frac{2L_F\Delta_F\D_T}{T}}.
\end{equation}
In particular, the gap between the uniform and optimal horizon-averaged second moments is
\begin{equation}\label{eq:diversity-gap}
\U_T-\D_T
=\frac{d+2}{T}\sum_{t=1}^T\E\left[\frac1N\sum_{i=1}^N\left(a_{i,t}-\frac1N\sum_{j=1}^N a_{j,t}\right)^2\right]\ge0.
\end{equation}
Equality holds if and only if all examples have the same gradient norm at every step, almost surely. This comparison uses a common parameter trajectory and does not rank the separate trajectories generated by different samplers.

\begin{table}[t]
\centering\small
\caption{Leading convergence terms on a common trajectory. The SAZO bound additionally includes the adaptation and finite-difference terms in Theorem~\ref{cor:rate}; hindsight allocation is a comparator.}\label{tab:oracle-comparison}
\begin{tabular}{@{}lll@{}}
\toprule
Sampling allocation & Average second moment & Leading term\\
\midrule
Uniform (MeZO) & $\U_T$ & $\sqrt{2L_F\Delta_F\U_T/T}$\\[3pt]
Best fixed in hindsight & $\mathcal H_T$ & $\sqrt{2L_F\Delta_F\mathcal H_T/T}$\\[3pt]
Optimal at each step (SAZO leading term) & $\D_T$ & $\sqrt{2L_F\Delta_F\D_T/T}$\\
\bottomrule
\end{tabular}
\end{table}

%% file: sections/experiments.tex
\section{Experiments}\label{sec:experiments}
We evaluate SAZO from five complementary perspectives: downstream performance on masked and generative language models, convergence across model scales, scaling with the training-population size, computational efficiency, and robustness to its sampling hyperparameters. Appendix~\ref{app:experimental-settings} provides the detailed experimental settings and additional~diagnostics.

\subsection{Performance and convergence across model families}\label{sec:performance-convergence}
\subsubsection{Masked language models}\label{sec:benchmark}
We first examine whether SAZO's optimization strategy translates into downstream accuracy. Using RoBERTa-large (350M) \citep{liu2019roberta} with $k=512$ examples per class, we consider six tasks spanning sentiment classification, natural language inference, and question classification. Table~\ref{tab:main} compares zero-shot learning, LP, and full-parameter and parameter-efficient FT, MeZO, HiZOO, and SAZO; across tasks, SAZO outperforms zero-shot and LP and approaches~FT.

\textbf{SAZO improves both full-parameter and parameter-efficient ZO tuning.} Relative to the stronger of MeZO and HiZOO, SAZO raises the average accuracy by 1.6 percentage points under full-parameter tuning and by 1.1 points under PEFT. The same pattern appears on individual tasks: the gains on TREC are 1.5 and 1.6 points for full-parameter tuning and PEFT, respectively, while those on MNLI are 1.3 and 1.8 points.

\begin{table}[t]
\centering\small
\setlength{\tabcolsep}{3pt}
\renewcommand{\arraystretch}{1.0}
\caption{Experiments on RoBERTa-large (350M parameters) with $k=512$ examples per class. LP: linear probing; FT: fine-tuning with Adam; PEFT: the better result between LoRA and prefix tuning for each task. MeZO and HiZOO results are reproduced. All tuning results report mean test accuracy, with standard deviations in subscripts. Bold: best ZO result.}\label{tab:main}
% Reproduced under the $k=512$ setup of \citet{malladi2023mezo}. Each FT, MeZO,
% and SAZO PEFT task entry is the larger of its LoRA and Prefix means. SAZO uses
% Experiment Results/Roberta k=512.md (three seeds).
% HiZOO: user-supplied results on 2026-09-24.
\begin{tabular*}{\linewidth}{@{\extracolsep{\fill}}lccccccc@{}}
\toprule
\multirow{2}{*}{Method} & \multicolumn{2}{c}{Sentiment} & \multicolumn{3}{c}{Natural language inference} & Question type & \multirow{2}{*}{Avg.}\\
\cmidrule(lr){2-3}\cmidrule(lr){4-6}\cmidrule(lr){7-7}
 & SST-2 & SST-5 & SNLI & MNLI & RTE & TREC & \\
\midrule
Zero-shot & $79.0$ & $35.5$ & $50.2$ & $48.8$ & $51.4$ & $32.0$ & $49.5$\\
LP & $91.3_{\pm0.5}$ & $51.7_{\pm0.5}$ & $80.9_{\pm1.0}$ & $71.5_{\pm1.1}$ & $73.1_{\pm1.5}$ & $89.4_{\pm0.5}$ & $76.3$\\
FT & $93.9_{\pm0.7}$ & $55.9_{\pm0.9}$ & $88.7_{\pm0.8}$ & $84.4_{\pm0.8}$ & $82.7_{\pm1.4}$ & $97.3_{\pm0.2}$ & $83.8$\\
FT (PEFT) & $94.2_{\pm0.2}$ & $55.3_{\pm0.7}$ & $88.3_{\pm0.5}$ & $83.9_{\pm0.6}$ & $83.2_{\pm1.3}$ & $97.4_{\pm0.2}$ & $83.7$\\
\midrule
MeZO & $93.3_{\pm0.7}$ & $53.2_{\pm1.4}$ & $83.0_{\pm1.0}$ & $78.3_{\pm0.5}$ & $78.6_{\pm2.0}$ & $94.3_{\pm1.3}$ & $80.1$\\
MeZO (PEFT) & $93.4_{\pm0.4}$ & $53.6_{\pm0.5}$ & $84.8_{\pm1.1}$ & $79.8_{\pm1.2}$ & $77.6_{\pm1.3}$ & $95.0_{\pm0.7}$ & $80.7$\\
\midrule
HiZOO & $93.5_{\pm0.4}$ & $53.5_{\pm1.2}$ & $83.3_{\pm1.4}$ & $77.2_{\pm1.5}$ & $79.1_{\pm1.2}$ & $94.9_{\pm1.7}$ & $80.3$\\
HiZOO (PEFT) & $\mathbf{94.3}_{\pm0.5}$ & $54.2_{\pm0.4}$ & $85.7_{\pm0.7}$ & $79.7_{\pm1.0}$ & $80.9_{\pm1.6}$ & $93.1_{\pm0.5}$ & $81.3$\\
\midrule
SAZO & $\mathbf{94.3}_{\pm0.5}$ & $53.6_{\pm0.8}$ & $85.3_{\pm0.4}$ & $79.6_{\pm0.3}$ & $\mathbf{82.3}_{\pm2.4}$ & $96.4_{\pm0.6}$ & $81.9$\\
SAZO (PEFT) & $94.0_{\pm0.8}$ & $\mathbf{54.5}_{\pm0.5}$ & $\mathbf{86.5}_{\pm0.4}$ & $\mathbf{81.6}_{\pm1.4}$ & $80.9_{\pm1.7}$ & $\mathbf{96.6}_{\pm0.5}$ & $\mathbf{82.4}$\\
\bottomrule
\end{tabular*}
\end{table}

\subsubsection{Generative language models}\label{sec:autoregressive}
We then test whether the benefits extend to generative language models (GLMs). Table~\ref{tab:opt13b} compares full-parameter SAZO, MeZO, and HiZOO on OPT-1.3B \citep{zhang2022opt}, Phi-2 (2.7B), and Llama3-8B across eight classification, multiple-choice, and question-answering tasks.

\textbf{SAZO achieves SOTA average performance compared with other ZO optimizers.} On OPT-1.3B, Phi-2, and Llama3-8B, SAZO improves the average over the stronger of MeZO and HiZOO by 1.2, 1.7, and 3.9 percentage points, respectively. The gains are especially large on Llama3-8B CB and DROP, where SAZO exceeds the stronger baseline by 10.8 and 8.4 points.

\begin{table}[!htb]
\centering\footnotesize
\setlength{\tabcolsep}{2pt}
\renewcommand{\arraystretch}{1.0}
\caption{Full-parameter generative-LM results. MeZO and HiZOO results are reproduced. }\label{tab:opt13b}
\begin{tabular*}{\linewidth}{@{\extracolsep{\fill}}llccccccccc@{}}
\toprule
\multirow{2}{*}{Model} & \multirow{2}{*}{Method} & \multicolumn{4}{c}{Classification} & \multicolumn{2}{c}{Multiple choice} & \multicolumn{2}{c}{Question answering} & \multirow{2}{*}{Avg.}\\
\cmidrule(lr){3-6}\cmidrule(lr){7-8}\cmidrule(lr){9-10}
 & & SST-2 & BoolQ & CB & WiC & COPA & MultiRC & SQuAD & DROP & \\
\midrule
\multirow{3}{*}{OPT-1.3B}
 & MeZO & 88.3 & 64.0 & 69.6 & 57.4 & 76.0 & \textbf{61.9} & 73.1 & \textbf{23.7} & 64.2\\
 & HiZOO & 90.7 & 64.0 & \textbf{71.4} & 58.2 & 75.0 & 60.1 & \textbf{76.6} & 21.3 & 64.7\\
 & SAZO & \textbf{90.8} & \textbf{65.0} & \textbf{71.4} & \textbf{60.8} & \textbf{77.0} & \textbf{61.9} & \textbf{76.6} & 23.3 & \textbf{65.9}\\
\midrule
\multirow{3}{*}{Phi-2}
 & MeZO & 86.6 & 72.4 & 75.0 & 54.4 & 86.0 & 78.2 & 85.7 & 37.8 & 72.0\\
 & HiZOO & \textbf{88.9} & 72.0 & 75.2 & \textbf{59.2} & 86.0 & \textbf{79.2} & 85.7 & 36.2 & 72.8\\
 & SAZO & 88.5 & \textbf{78.2} & \textbf{76.8} & 57.2 & \textbf{89.0} & \textbf{79.2} & \textbf{86.0} & \textbf{41.1} & \textbf{74.5}\\
\midrule
\multirow{3}{*}{Llama3-8B}
 & MeZO & 92.2 & 76.7 & 69.6 & \textbf{57.8} & 88.0 & 77.6 & 86.7 & 57.1 & 75.7\\
 & HiZOO & 94.3 & 77.1 & 69.6 & 57.7 & 89.0 & 77.9 & 87.7 & 49.4 & 75.3\\
 & SAZO & \textbf{94.4} & \textbf{80.8} & \textbf{80.4} & 56.6 & \textbf{91.0} & \textbf{78.8} & \textbf{89.3} & \textbf{65.5} & \textbf{79.6}\\
\bottomrule
\end{tabular*}
\end{table}

\subsubsection{Convergence across model scales}
\textbf{SAZO accelerates optimization across model scales.} Figure~\ref{fig:intro-loss} compares SST-2 training-loss trajectories on RoBERTa-large, Phi-2, and Llama 3 at a common forward-pass budget of 24K. Each method uses its optimal learning rate from the same search grid for each model; detailed settings are provided in Appendices~\ref{app:roberta-intro} and~\ref{app:extended-models}. SAZO reaches MeZO's endpoint losses in 4K, 6K, and 6K passes, while HiZOO requires 10.5K, 9K, and 6K passes, respectively.

\begin{figure}[!htb]
\centering
\includegraphics[width=\linewidth]{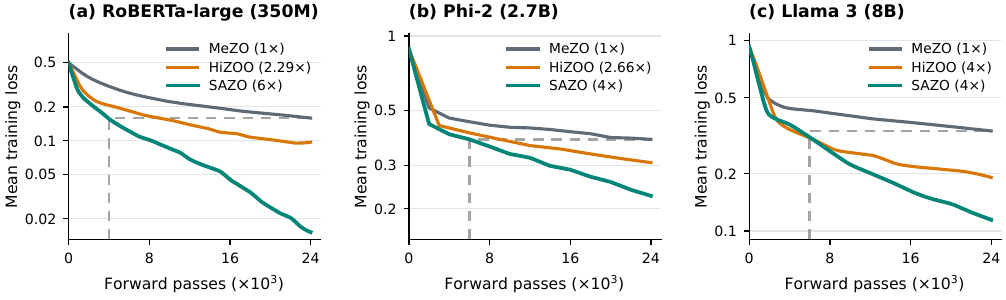}
\caption{\textbf{SAZO substantially accelerates convergence across model scales.} At 24K forward passes, SAZO reaches MeZO's endpoint loss $6\times$, $4\times$, and $4\times$ faster in (a)--(c), respectively; HiZOO achieves $2.29\times$, $2.66\times$, and $4\times$.}\label{fig:intro-loss}
\end{figure}

\subsection{Scaling with training-population size}\label{sec:population-scaling}
\textbf{SAZO remains effective as the training population grows.} Having established gains across model scales, we vary the amount of training data. Figure~\ref{fig:population-scaling} compares SST-2 test accuracy as the RoBERTa-large training population increases from $N=1{,}024$ to $4N$. SAZO outperforms MeZO at every scale, with gains of 0.6, 1.1, 1.4, and 1.5 percentage points at $N$, $2N$, $3N$, and $4N$, respectively.

\begin{figure}[!htbp]
\centering
\includegraphics[width=.60\linewidth]{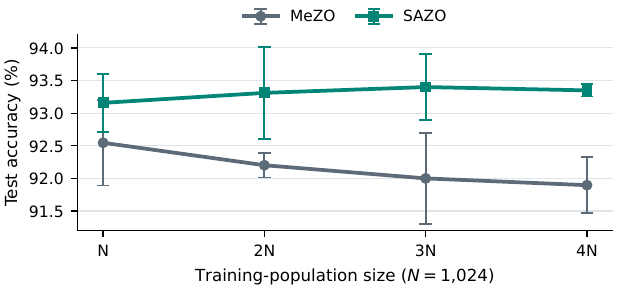}
\caption{RoBERTa-large SST-2 test accuracy as the training population scales from $N=1{,}024$ to $4N$. Curves show mean $\pm$ standard deviation over three seeds. See Appendix~\ref{app:population-scaling} for details.}\label{fig:population-scaling}
\end{figure}
\FloatBarrier

\subsection{Memory and wall-clock efficiency}\label{sec:efficiency}
We finally examine whether the gains above preserve the computational advantages of ZO optimization. \textbf{SAZO retains MeZO's memory footprint.} Across the OPT configurations in Figure~\ref{fig:memory}, the measured GB values for SAZO match those of MeZO. Compared with first-order methods, SAZO reduces memory requirements by factors of $6.8$--$12.2\times$ for full FT and $3.3$--$6.1\times$ for prefix FT (26 versus 316 GB at 13B for full FT). Thus, the downstream improvements do not sacrifice the low-memory advantage that makes ZO post-training attractive.

\textbf{SAZO also maintains low per-update latency.} Table~\ref{tab:wallclock} shows that its per-update time remains comparable to MeZO across the evaluated models. Together with faster loss reduction, this yields more effective updates at MeZO-like cost.

\begin{figure}[!htbp]
\centering
\includegraphics[width=.92\linewidth]{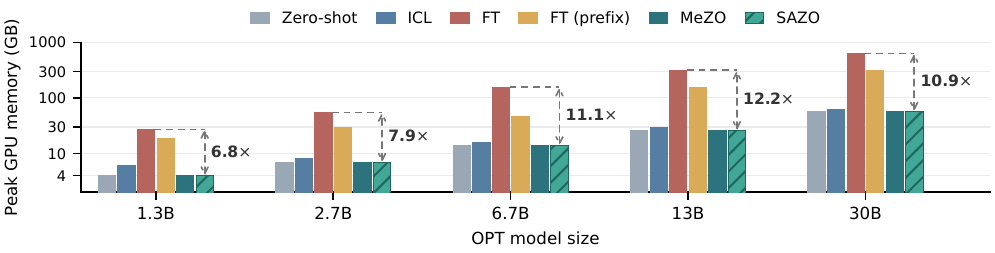}
\caption{Peak GPU memory on MultiRC (mean length 400 tokens; batch size 1/device), summed across GPUs. Arrows show FT/SAZO ratios.}\label{fig:memory}
\end{figure}

\begin{table}[!htbp]
\centering\small
\caption{Wall-clock time per update (seconds).}\label{tab:wallclock}
% Source: Experiment Results/wall clock.png; Ours is SAZO. All values are seconds/update.
\begin{tabular}{@{}lcccc@{}}
\toprule
Method & RoBERTa-large & OPT-1.3B & Phi-2 & Llama3-8B\\
\midrule
MeZO & 0.1153 & 0.1836 & 0.4235 & 0.9490\\
HiZOO & 0.2738 & 0.4793 & 1.0519 & 2.4798\\
SAZO & 0.1128 & 0.1839 & 0.4298 & 0.9805\\
\bottomrule
\end{tabular}
\end{table}

\subsection{Ablation studies}\label{sec:beta-sensitivity}
We conclude by examining sensitivity to the importance-score update. \textbf{SAZO is robust to the smoothing coefficient.} Figure~\ref{fig:hyperparameter-ablation} varies this coefficient on SST-2. Across $\beta\in\{0.005,0.05,0.5\}$, the trajectories remain stable and reach closely clustered endpoint losses, with $\beta=0.05$ performing best. In contrast, $\beta=1$ discards the accumulated history, relies only on the latest directional observation, and diverges. This contrast highlights the importance of per-example moving averages for tracking evolving importance from intermittent feedback while smoothing directional noise. SAZO is similarly robust to the exploration coefficient $\varepsilon$ over a broad range; Appendix~\ref{app:hyperparameter-ablation} provides the full sweep and experimental details.

\begin{figure}[!htbp]
\centering
\includegraphics[width=.72\linewidth]{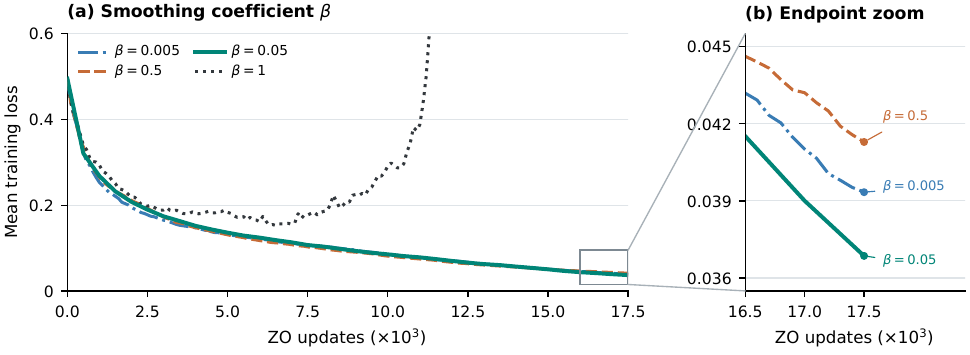}
\caption{Smoothing coefficient ablation on SST-2. Detailed settings are provided in Appendix~\ref{app:hyperparameter-ablation}.}\label{fig:hyperparameter-ablation}
\end{figure}
\FloatBarrier

%% file: sections/conclusion.tex
\section{Conclusion}
In this work, we propose SAZO, a sample-adaptive zeroth-order optimizer that reuses existing directional differences to estimate sample importance without additional model evaluations. Per-example exponential moving averages track how this importance evolves from sparse minibatch feedback and require only $O(N)$ scalar state. Our analysis asymptotically matches the variance-optimal dynamic oracle's leading convergence term, yielding a smaller convergence coefficient than uniform sampling when sample importance is heterogeneous. Extensive experiments demonstrate improved average downstream performance and up to $6\times$ faster convergence to the same training loss in terms of query counts.

%% file: sections/appendix.tex
\section{Proofs of the main results}\label{app:proofs}
Let $\mathcal F_t$ denote the history before the current draws, so $x_t,m_t,p_t$ are $\mathcal F_t$-measurable and $\E_t$ is conditional expectation. For the proofs, it is convenient to remove the common Gaussian factor from the second moments in Section~\ref{sec:theory}. Throughout the appendices, define
\[
\widetilde V_t(p)=\frac{V_t(p)}{d+2}=\frac1{N^2}\sum_i\frac{a_{i,t}^2}{p_i},\qquad
\bar a_t=\frac1N\sum_i a_{i,t},\qquad
\widetilde V_t^\star=\frac{V_t^\star}{d+2}=\bar a_t^2,
\]
Writing $\rho_t=V_t(p_t)-V_t^\star$, use $\widetilde\rho_t=\rho_t/(d+2)$, $\widetilde\Reg_T=\Reg_T/(d+2)$, $\widetilde\D_T=\D_T/(d+2)$, $\widetilde\U_T=\U_T/(d+2)$, $\widetilde{\mathcal H}_T=\mathcal H_T/(d+2)$, and $\widetilde K_T=K_T/(d+2)$. Also set $C_d=L_F(d+2)$. These are exact rescalings of the main-text~quantities.

For the explicit radius calculations below, set $D_6=d(d+2)(d+4)$, $D_8=d(d+2)(d+4)(d+6)$, $H_2^2=N^{-1}\sum_iH_i^2$, $J_{i,t}=\pi H_i^2D_6\mu_t^4/72$, and $W_t=H_2^2D_8\mu_t^4/36$. Write $\overline W_T=T^{-1}\sum_tW_t$ and $\mathcal B_T=T^{-1}\sum_{t,i}\E[\beta^2p_{i,t}J_{i,t}]$. With $r=\beta\varepsilon/N$, define
\[
R_{\mu,T}=\frac{\mathcal B_T+NL_2^2\eta^2\overline W_T/\varepsilon}{N\varepsilon r^2},\qquad
\Omega_{\mu,T}=\sqrt{\widetilde{\D}_T\overline W_T}
+\frac{L_F\eta}{\varepsilon}\sqrt{\frac{(d+2)A_T\overline W_T}{N}}
+\frac{L_F\eta}{\varepsilon}\overline W_T.
\]
These quantities expand the compact radius errors used in the main text. At a fixed horizon and fixed algorithm parameters, $R_{\mu,T}=O(\mu^4)$ and $\Omega_{\mu,T}=O(\mu^2)$ for $\mu=\max_t\mu_t\le1$, with moment dependencies retained here. For local observation variance statements, the hidden coefficient also contains $a_{i,t}$ through $a_{i,t}\sqrt{J_{i,t}}$.

\input{sections/theory_details}

\subsection{Gaussian identities and the sampling oracle}
In this appendix only, write $s_{i,t}^{\mu_t}=s_{i,t}$ for the actual difference used in the main text and introduce $s_{i,t}^0=z_t^\top g_{i,t}$ as a Taylor expansion term at the same current parameter vector. There is no auxiliary optimization trajectory. Condition on $\mathcal F_t$ and suppress $t$. For any fixed $g$, $z^\top g$ has distribution $\norm g Z$ with $Z\sim\mathcal N(0,1)$. Since $\E|Z|=\sqrt{2/\pi}$ and $\E Z^2=1$, the Gaussian moments of $Y_i^0=\sqrt{\pi/2}|s_i^0|$ follow. Adding the Taylor remainder bounded in Appendix~\ref{app:fd} gives Eq.~\eqref{eq:obs-moments}. To compute the update moment, rotate $g$ onto the first coordinate:
\[
\E_z\norm{zz^\top g}^2
=\norm g^2\E\left[z_1^2\sum_{j=1}^dz_j^2\right]
=(3+d-1)\norm g^2.
\]
Also $\E zz^\top=I_d$. Averaging over the selected index gives the leading moment $(d+2)\widetilde V_t(p_t)$. Adding the Taylor remainder gives Eq.~\eqref{eq:update-moments}. Finally,
\[
\left(\sum_i a_i\right)^2
=\left(\sum_i\frac{a_i}{\sqrt{p_i}}\sqrt{p_i}\right)^2
\le\sum_i\frac{a_i^2}{p_i}.
\]
Equality holds for $p_i\propto a_i$ on the positive-norm support. Infima on the boundary are understood by limits, with zero-norm terms contributing zero.

\subsection{Stationary variance of the EMA}\label{app:ema-variance}
Fix the parameter vector and index $i$, and enumerate its visits by $k$. At a fixed radius, with fresh independent Gaussian directions, let $a_i^\mu=\E Y_i^{(k)}$ and $v_i^\mu=\operatorname{Var}(Y_i^{(k)})$. The central-difference estimates obey $a_i^\mu=a_i+O(\mu^2)$ with the explicit coefficients in Proposition~\ref{prop:moments}. For deterministic initial score $m_i^{(0)}$, the recursion $m_i^{(k)}=(1-\beta)m_i^{(k-1)}+\beta Y_i^{(k)}$~gives
\[
\E(m_i^{(k)}-a_i^\mu)^2=(1-\beta)^{2k}(m_i^{(0)}-a_i^\mu)^2
+\frac{\beta}{2-\beta}\bigl[1-(1-\beta)^{2k}\bigr]v_i^\mu.
\]
This follows by centering the recursion and summing a geometric series. At visit times, the variance converges to $\beta v_i^\mu/(2-\beta)$. Averaging reduces variance but retains the finite-difference bias in $a_i^\mu$. The result concerns a fixed target; it does not remove the drift term on an evolving optimization trajectory.

\subsection{A global normalization bound}
\begin{lemma}\label{lem:normalization}
For $a\ge0$, $m>0$, $0<\varepsilon\le1/2$, and $p=(1-\varepsilon)m/(\sum_i m_i)+\varepsilon\mathbf1/N$, let $G=N^{-1}\sum_i a_i$ and $e=m-a$.~Then
\begin{equation}\label{eq:convert}
\widetilde V(p)-G^2\le\frac8{N\varepsilon}\norm e^2+4\varepsilon G^2.
\end{equation}
\end{lemma}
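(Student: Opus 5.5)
The plan is to compare $p$ with the \emph{oracle mixture} $q=(1-\varepsilon)a/S+\varepsilon\mathbf 1/N$ rather than with $p^\star$ directly, where $S=\sum_ia_i=NG$ and $M=\sum_im_i$. A direct expansion of $a_i^2/p_i$ in $e$ produces factors of $\varepsilon^{-2}$ or a $G^2$ coefficient near $2$, so it cannot give the stated constants. We may assume $S>0$; if $a=0$ the left side is $-G^2=0$ and the bound is trivial.

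\emph{Step 1 (exploration cost at the oracle).} Since $q_i\ge(1-\varepsilon)a_i/S$, we have $a_i^2/q_i\le a_iS/(1-\varepsilon)$. Hence $\widetilde V(q)\le G^2/(1-\varepsilon)$. For $\varepsilon\le1/2$ one checks $(1+2\varepsilon)(1-\varepsilon)\ge1$, so
\[
\widetilde V(q)-G^2\le 2\varepsilon G^2 .
\]
This uses half of the $4\varepsilon G^2$ budget.

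\emph{Step 2 (perturbation from $q$ to $p$).} Use the exact identity
\[
\frac{a_i^2}{p_i}-\frac{a_i^2}{q_i}
=\frac{a_i^2}{q_i^2}(q_i-p_i)+\frac{a_i^2}{q_i^2}\frac{(q_i-p_i)^2}{p_i}.
\]
The key point is that the weights $c_i=a_i^2/q_i^2$ are uniformly bounded: $0\le c_i\le S^2/(1-\varepsilon)^2\le4S^2$.

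For the linear term, use $\sum_i(q_i-p_i)=0$ to subtract a constant and get $\bigl|\sum_ic_i(q_i-p_i)\bigr|\le 2S^2\norm{q-p}_1$. Next, $q-p=(1-\varepsilon)(a/S-m/M)$, and
\[
\norm{a/S-m/M}_1\le \norm e_1/S+|M-S|/S\le 2\norm e_1/S\le 2\sqrt N\norm e/S .
\]
So after dividing by $N^2$ the linear term is at most a constant times $G\norm e/\sqrt N$. By AM--GM this is at most $\varepsilon G^2+C\norm e^2/(N\varepsilon)$, which uses the remaining $\varepsilon G^2$.

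For the quadratic term, use $p_i\ge\varepsilon/N$ exactly once together with $c_i\le4S^2$. Then estimate $\sum_i(q_i-p_i)^2\le\norm{q-p}_1^2$ through the same $\ell_1$ bound, giving a contribution of order $N^{-2}\cdot S^2\cdot(N/\varepsilon)\cdot N\norm e^2/S^2=\norm e^2/\varepsilon$.

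\emph{Main obstacle.} The quadratic term, as just bounded, is larger than the target $8\norm e^2/(N\varepsilon)$ by a factor of $N$. This is where I expect the difficulty, since the crude bound $\sum_i(q_i-p_i)^2\le\norm{q-p}_1^2$ wastes a factor $N$.

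My first attempt is to bound componentwise instead, rather than through the $\ell_1$ norm. Writing $\delta=M/S-1$, one has $|Sq_i-Sp_i|\le(1-\varepsilon)\bigl(|e_i|+m_i|\delta|/(1+\delta)\bigr)$. The $|e_i|$ part then gives exactly $\sum_ie_i^2/p_i\le N\norm e^2/\varepsilon$, which is the right order.

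The $m_i\delta$ part is harder. I would keep $1/p_i\le M/((1-\varepsilon)m_i)$ for this piece, instead of $p_i\ge\varepsilon/N$, so that it becomes $M\delta^2\sum_im_i\lesssim (M-S)^2\le N\norm e^2$. That would be better by a factor $1/\varepsilon$. With the two pieces combined via $(u+v)^2\le2u^2+2v^2$, the constant should come out near $8$. Finally, I would verify that the constants from the linear step fit within $8/(N\varepsilon)$ and $4\varepsilon G^2$, adjusting the AM--GM weights if needed.
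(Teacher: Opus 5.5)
Your argument is correct in substance and takes a different route from the paper. One constant needs more careful handling than your sketch suggests.

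The paper never introduces the oracle mixture. It uses the exact identity
\[
N^2\bigl[\widetilde V(p)-G^2\bigr]=\sum_i\frac{(a_i-Sp_i)^2}{p_i}.
\]
It then writes $a-Sp=(a-Sm/M)+S\varepsilon(m/M-\mathbf 1/N)$, with $a-Sm/M=-e+(M-S)m/M$, and applies $(x+y)^2\le2x^2+2y^2$ in the $p^{-1}$-weighted norm. The tracking vector $a-Sm/M$ is bounded exactly as in your refined componentwise step: $p_i\ge\varepsilon/N$ for the $e$ part and $p_i\ge(1-\varepsilon)m_i/M$ for the $(M-S)$ part. This gives $2N\norm e^2/(\varepsilon(1-\varepsilon))$. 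The exploration vector contributes $\sum_i(m_i/M)^2/p_i-1\le\varepsilon/(1-\varepsilon)$.

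Because the two pieces are separated by the square inequality, no mixed term in $G\norm e$ appears and no $\ell_1$ estimate is needed. The bound $4\norm e^2/(N\varepsilon(1-\varepsilon))+2\varepsilon G^2/(1-\varepsilon)$ follows at once. So your opening remark that a direct expansion cannot reach the stated constants is not accurate.

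Your expansion around $(1-\varepsilon)a/S+\varepsilon\mathbf 1/N$ does buy something. It gives a sharper exploration cost, $\varepsilon G^2/(1-\varepsilon)$. Done properly, it also gives a quadratic term bounded by the paper's tracking piece without its factor $2$. The price is a linear term of order $G\norm e/\sqrt N$ that must be split by AM--GM. Your ``main obstacle'' is an artifact of routing the quadratic term through $\ell_1$, and your fix is exactly the paper's bound.

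\textbf{The bookkeeping point.} In the refined quadratic step you must use $c_i\le S^2/(1-\varepsilon)^2$, not $4S^2$.

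With $4S^2$, your bound on the quadratic term alone is $8(1-\varepsilon)\norm e^2/(N\varepsilon)$. That leaves only $8\norm e^2/N$ of slack. The linear term, split with at most $3\varepsilon G^2$ of remaining budget, costs at least a constant times $\norm e^2/(N\varepsilon)$. So for small $\varepsilon$ the total coefficient of $\norm e^2$ exceeds $8/(N\varepsilon)$, and no choice of AM--GM weight repairs this.

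With the sharp bound, $c_i(q_i-p_i)^2\le(a_i-Sm_i/M)^2$ exactly. The quadratic term is then at most $2\norm e^2/(N\varepsilon(1-\varepsilon))$. Taking weight $\varepsilon$ in the AM--GM split of your linear bound $4(1-\varepsilon)G\norm e/\sqrt N$ gives
\[
\widetilde V(p)-G^2\le3\varepsilon G^2+\Bigl[4(1-\varepsilon)^2+\frac{2}{1-\varepsilon}\Bigr]\frac{\norm e^2}{N\varepsilon}\le3\varepsilon G^2+\frac{6\norm e^2}{N\varepsilon}.
\]
The last step holds because the bracket is convex on $[0,1/2]$ with endpoint values $6$ and $5$, so this is within the claim.

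A small slip: $M\delta^2\sum_im_i$ is not comparable to $(M-S)^2$ on its own. You need to keep the factor $(1+\delta)^{-2}$ from your componentwise bound, which makes it exactly $(M-S)^2$.
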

\begin{proof}
Let $S=\sum_i a_i$, $M=\sum_i m_i$, $q=m/M$, and $u=\mathbf1/N$. To retain the normalization by $N^2$, work directly with the weighted squared norm $\norm v_{p^{-1}}^2=\sum_i v_i^2/p_i$. Expanding the square gives
\[
N^2\bigl[\widetilde V(p)-G^2\bigr]=\norm{a-Sp}_{p^{-1}}^2.
\]
Write $\delta=a-Sq=-e+(M-S)q$. Since $p_i\ge\varepsilon/N$, $p_i\ge(1-\varepsilon)q_i$, and $(M-S)^2\le N\norm e^2$,
\begin{align*}
\norm\delta_{p^{-1}}^2
&\le2\sum_i\frac{e_i^2}{p_i}+2(M-S)^2\sum_i\frac{q_i^2}{p_i}\\
&\le\left(\frac{2N}{\varepsilon}+\frac{2N}{1-\varepsilon}\right)\norm e^2
=\frac{2N}{\varepsilon(1-\varepsilon)}\norm e^2.
\end{align*}
The exploration contribution also admits a weighted bound:
\[
\varepsilon^2\norm{q-u}_{p^{-1}}^2
=\norm{q-p}_{p^{-1}}^2
=\sum_i\frac{q_i^2}{p_i}-1
\le\frac{\varepsilon}{1-\varepsilon}.
\]
Using $a-Sp=\delta+S\varepsilon(q-u)$ therefore yields
\begin{align*}
\widetilde V(p)-G^2
&\le\frac{4}{N\varepsilon(1-\varepsilon)}\norm e^2
+\frac{2\varepsilon}{1-\varepsilon}G^2\\
&\le\frac8{N\varepsilon}\norm e^2+4\varepsilon G^2.
\end{align*}
The argument includes $S=0$ and requires no closeness condition between $M$ and $S$.
\end{proof}

\subsection{Sparse EMA with correlated target drift}
The following lemma allows dependent inclusions and shared directional observations. Given the current history, let $O_i$ be an inclusion indicator independent of the fresh Gaussian direction, with $\pi_i=\E_t O_i$. The indicators for different coordinates may be dependent. For this local lemma, let $Y_i=Y_i^0$ denote the leading Gaussian observation and update $m_i^+=m_i+\beta O_i(Y_i-m_i)$, and suppose $\beta\pi_i\ge r>0$ for all steps and coordinates. Define $\delta a_{i,t}=a_{i,t+1}-a_{i,t}$.

\begin{lemma}[Visited-coordinate tracking]\label{lem:sparse}
With $E_0=\E\norm{m_1-a_1}^2$ and $Q_T=\sum_{t<T}\E\norm{a_{t+1}-a_t}^2$,
\begin{equation}\label{eq:general-mse}
\sum_t\E\norm{m_t-a_t}^2
\le\frac{2E_0}{r}
+\frac{3\sigma_G^2\beta^2}{r}\sum_{t,i}\E[\pi_{i,t}a_{i,t}^2]
+\frac{12Q_T}{r^2}.
\end{equation}
\end{lemma}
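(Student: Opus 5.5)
We track the coordinatewise error $e_{i,t}=m_{i,t}-a_{i,t}$ through a one-step conditional recursion and then sum over $t$. The update $m_i^+=m_i+\beta O_i(Y_i-m_i)$ gives
\[
m_{i,t+1}-a_{i,t}=(1-\beta O_i)e_{i,t}+\beta O_i(Y_i-a_{i,t}).
\]
Condition on $\mathcal F_t$. The fresh Gaussian direction is independent of $O_i$. By the Gaussian identities, $\E[Y_i^0\mid\mathcal F_t,O]=a_{i,t}$ and $\operatorname{Var}(Y_i^0)=\sigma_G^2a_{i,t}^2$. Hence the cross term vanishes, and since $O_i^2=O_i$,
\[
\E_t(m_{i,t+1}-a_{i,t})^2=\E_t\bigl[(1-\beta O_i)^2\bigr]e_{i,t}^2+\beta^2\pi_{i,t}\sigma_G^2a_{i,t}^2
\le(1-\beta\pi_{i,t})e_{i,t}^2+\beta^2\sigma_G^2\pi_{i,t}a_{i,t}^2 .
\]
The inequality uses $(1-\beta O)^2\le1-\beta O$ for $\beta\le1$. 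Using $\beta\pi_{i,t}\ge r$ and summing over $i$ yields
\[
\E_t\norm{m_{t+1}-a_t}^2\le(1-r)\norm{e_t}^2+\sigma_G^2\beta^2\sum_i\pi_{i,t}a_{i,t}^2 .
\]
Dependence between the different $O_i$ never enters, because each coordinate is handled separately before summing.

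Next we account for target drift. We write $e_{t+1}=(m_{t+1}-a_t)-\delta a_t$ and apply Young's inequality $(u+v)^2\le(1+\gamma)u^2+(1+\gamma^{-1})v^2$ with $\gamma=r/2$. Because $(1+r/2)(1-r)\le1-r/2$, this gives
\[
\E\norm{e_{t+1}}^2\le(1-\tfrac r2)\E\norm{e_t}^2+(1+\tfrac r2)\sigma_G^2\beta^2\textstyle\sum_i\E[\pi_{i,t}a_{i,t}^2]+(1+\tfrac2r)\E\norm{\delta a_t}^2 .
\]
Here $\delta a_t$ is not $\mathcal F_t$-measurable, since it depends on the current draw through $x_{t+1}$. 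That is why we use Young's inequality rather than any orthogonality between the noise and the drift.

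Finally, we unroll the contraction $u_{t+1}\le(1-r/2)u_t+b_t$. Summing over $t$ gives
\[
\sum_t u_t\le\frac{2}{r}\Bigl(u_1+\sum_t b_t\Bigr).
\]
The first piece contributes $2E_0/r$. Since $r\le1$, the noise piece has coefficient $(2/r)(1+r/2)\le3/r$, which gives the middle term. For the drift piece, $(2/r)(1+2/r)\le 6/r^2\le12/r^2$, which produces the $12Q_T/r^2$ term; there is ample slack here.

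The main obstacle is the conditioning subtlety just noted. The noise term must be handled conditionally on $\mathcal F_t$, while the drift term is correlated with the same Gaussian direction. Splitting via Young's inequality \emph{after} the conditional noise bound resolves this, at the price of constant factors that remain within the stated ones.
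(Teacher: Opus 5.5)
Your proof is correct and follows the paper's argument essentially step for step. It uses the conditional one-step bound on $(1-\beta O_i)e_i+\beta O_i(Y_i-a_i)$ with contraction factor $1-r$, then Young's inequality with parameter $r/2$ to handle the drift, which is not $\mathcal F_t$-measurable, and finally unrolls the $(1-r/2)$ contraction, which, as the paper also remarks, actually gives drift constant $6$ rather than $12$. The only differences are cosmetic: you use $(1-\beta O)^2\le 1-\beta O$ instead of computing $1-(2\beta-\beta^2)\pi_i$ exactly, and you simplify the factors $(1+r/2)$ and $(1+2/r)$ after unrolling rather than before.
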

\begin{proof}
For $e_i=m_i-a_i$ and $\nu_i=Y_i-a_i$, write
\[
X_i=(1-\beta O_i)e_i+\beta O_i\nu_i,\qquad
e_{i,t+1}=X_i-\delta a_{i,t}.
\]
Conditional centering and independence of $O_i$ and the new direction imply
\[
\E_t X_i^2=[1-(2\beta-\beta^2)\pi_i]e_i^2
+\beta^2\pi_i\sigma_G^2a_i^2
\le(1-r)e_i^2+\beta^2\pi_i\sigma_G^2a_i^2.
\]
Young's inequality with parameter $r/2$~gives
\[
\E_t e_{i,t+1}^2
\le(1-r/2)e_i^2+\tfrac32\beta^2\pi_i\sigma_G^2a_i^2
+\frac3r\E_t(\delta a_{i,t})^2,
\]
because $(1+r/2)(1-r)\le1-r/2$ and $1+2/r\le3/r$. Summing for $t<T$ and over $i$, including the terminal contribution using $r/2\le1$, gives Eq.~\eqref{eq:general-mse}, in fact with 6 instead of 12 for the drift constant. We retain 12 to use common constants with the finite-difference extension. The argument never assumes that $X_i$ and $\delta a_{i,t}$ are independent.
\end{proof}

For Algorithm~\ref{alg:single}, $\pi_i=p_i$ and $r=\beta\varepsilon/N$. Use $T^{-1}\sum_{t,i}\E[p_{i,t}a_{i,t}^2]\le A_T$. Adding the finite-difference observation remainder as in Appendix~\ref{app:fd} gives, with $e_t=m_t-a_t$,
\begin{equation}\label{eq:mse-outline}
\sum_t\E\norm{e_t}^2
\le\frac{2E_0}{r}+\frac{3\sigma_G^2\beta^2}{r}T A_T+\frac{12Q_T}{r^2}+O(T\mu^4).
\end{equation}

\subsection{Closing the drift and proving stationarity}
All quantities are evaluated on the actual finite-difference trajectory. The perturbed sparse-EMA inequality and normalization give
\[
\widetilde{\Reg}_T\le\frac{16E_0}{N\varepsilon r}
+\frac{24\sigma_G^2\beta^2}{N\varepsilon r}T A_T
+\frac{96}{N\varepsilon r^2}(Q_T+T\mathcal B_T)+4\varepsilon T\widetilde{\D}_T.
\]
Gradient continuity and the second-moment bound imply
\[
Q_T\le NL_2^2\sum_{t<T}\E\norm{x_{t+1}-x_t}^2\le2N\Lambda_d\eta^2(T\widetilde{\D}_T+\widetilde{\Reg}_T)
+2NL_2^2\eta^2T\overline W_T/\varepsilon.
\]
Substitution and absorption prove Theorem~\ref{thm:tracking} after multiplication by $d+2$, with additive remainder at most $192(d+2)R_{\mu,T}/(1-\kappa)$.

For descent, write $\widehat g_t=\widehat g_t^0+h_t$ at the same parameters, selected index, and direction, and let $M_t=(d+2)\widetilde V_t(p_t)$. The Taylor bounds give
\[
\norm{\E_t h_t}^2\le W_t,\qquad
\E_t\norm{\widehat g_t}^2\le M_t+2\sqrt{M_tW_t/\varepsilon}+W_t/\varepsilon.
\]
Smoothness implies
\[
\E_t F(x_{t+1})\le F(x_t)-\eta\norm{\nabla F(x_t)}^2
+\eta\norm{\nabla F(x_t)}\sqrt{W_t}
+\frac{L_F\eta^2}{2}\left(M_t+2\sqrt{M_tW_t/\varepsilon}+W_t/\varepsilon\right).
\]
Using $\Psi_T\le\widetilde{\D}_T$ and $T^{-1}\sum_t\E M_t\le(d+2)A_T/(N\varepsilon)$, Cauchy--Schwarz over time and randomness bounds the additional terms by $\Omega_{\mu,T}$. Summation proves Lemma~\ref{thm:outer} without multiplying its leading term by the coarse moment bound's factor two.

For Theorem~\ref{cor:rate}, substitution of the schedules and $r=\beta\varepsilon/N$ gives, when $\kappa\le1/2$,
\[
\widetilde{\Reg}_T/T\le\frac{32E_0}{c_\varepsilon^2c_\beta T^{1/2}}
+\frac{384\widetilde K_T}{T^{1/10}}+384R_{\mu,T}.
\]
Insert this into the descent lemma. Holding $\widetilde{\D}_T$ fixed, the leading expression is minimized at $c_\eta=\sqrt{2\Delta_F/(C_d\widetilde{\D}_T)}$. Before collecting radius terms, the result is
\[
\Psi_T\le\sqrt{\frac{2C_d\Delta_F\widetilde{\D}_T}{T}}+192\left(\frac{C_dc_\eta \widetilde K_T}{T^{3/5}}+\frac{C_dc_\eta E_0}{c_\varepsilon^2c_\beta T}+\frac{C_dc_\eta R_{\mu,T}}{\sqrt T}+\Omega_{\mu,T}\right).
\]
This gives Eq.~\eqref{eq:rate-explicit} after rescaling, with the radius-error convention stated in the main text. The general descent lemma covers undefined or infeasible oracle coefficients; the choice is conditional on the resulting trajectory quantity.

\section{Parameter balance and comparator qualifications}\label{app:balance}
The radius-independent noise, drift, and exploration contributions have scales
\[
\frac{\beta A_T}{\varepsilon^2},\qquad
\frac{N^2\Lambda_d\eta^2\widetilde{\D}_T}{\varepsilon^3\beta^2},\qquad \varepsilon\widetilde{\D}_T.
\]
For positive moments and smoothness, their formal balance is
\begin{align}
\varepsilon&\asymp N^{1/5}\Lambda_d^{1/10}\eta^{1/5}(A_T/\widetilde{\D}_T)^{1/5},\label{eq:balance-eps}\\
\beta&\asymp N^{3/5}\Lambda_d^{3/10}\eta^{3/5}(\widetilde{\D}_T/A_T)^{2/5}.\label{eq:balance-beta}
\end{align}
If these choices and the resulting trajectory satisfy feasibility and $\kappa\le1/2$,~then
\begin{equation}\label{eq:balanced}
\widetilde{\Reg}_T/T\lesssim N^{1/5}\Lambda_d^{1/10}\eta^{1/5}A_T^{1/5}\widetilde{\D}_T^{4/5}
+\frac{E_0}{TN\sqrt{\Lambda_d}\eta}+O(R_{\mu,T}).
\end{equation}
Only universal constants are suppressed. These choices require $A_T,\widetilde{\D}_T$, which depend on the parameters and resulting trajectory; this is a conditional balance, not an online tuning rule or an existence theorem for self-consistent parameters. The initialization term follows from $\varepsilon^2\beta\asymp N\sqrt{\Lambda_d}\eta$. The finite-difference correction remains additive.

Balancing noise and drift alone gives $\beta\varepsilon^{1/3}/\eta^{2/3}\asymp(N^2\Lambda_d\widetilde{\D}_T/A_T)^{1/3}$. A common numerical value across tasks is an empirical tuning constraint; the right side is problem-dependent.

\subsection{Three fixed and dynamic costs}\label{app:comparators}
In addition to $\widetilde{\mathcal H}_T$ and $\widetilde{\U}_T$, define
\[
\widetilde{\mathcal G}_T=\inf_{p\in\Delta_N}\frac1T\sum_t\E \widetilde V_t(p)
=\left(\frac1N\sum_i\sqrt{\frac1T\sum_t\E a_{i,t}^2}\right)^2.
\]
For each realized trajectory,
\[
\inf_p\frac1T\sum_t \widetilde V_t(p)
=\left(\frac1N\sum_i\sqrt{\frac1T\sum_t a_{i,t}^2}\right)^2.
\]
Pointwise minimization, $\E\inf\le\inf\E$, and evaluation at the uniform distribution establish
\begin{equation}\label{eq:oracle-chain}
\widetilde{\D}_T\le\widetilde{\mathcal H}_T\le\widetilde{\mathcal G}_T\le\widetilde{\U}_T.
\end{equation}
$\widetilde{\mathcal H}_T$ is the expected hindsight cost, while $\widetilde{\mathcal G}_T$ chooses a fixed distribution before taking the trajectory expectation. Interchanging them would change the comparator. For $a_t=c_tv$ with fixed $v\ge0$, the dynamic and hindsight fixed costs coincide. For the deterministic two-step example $a_1=(1,0)$, $a_2=(0,1)$, $\widetilde{\D}_T=1/4$ while $\widetilde{\mathcal H}_T=\widetilde{\mathcal G}_T=\widetilde{\U}_T=1/2$. This example illustrates a cost hierarchy and is not asserted to be a realized training trajectory.

Equation~\eqref{eq:diversity-gap} follows by expanding the squared deviations around $\bar a_t$ and multiplying by $d+2$. Consequently, $\widetilde{\D}_T<\widetilde{\U}_T$ whenever the example norms differ at some step with positive probability. The balanced expressions in Table~\ref{tab:oracle-comparison} follow by minimizing $\Delta_F/c+C_dcX/2$ over $c>0$ for each fixed positive normalized comparator quantity $X$. This is an algebraic comparison on the same trajectory; it does not require treating a hindsight-selected distribution as an implementable~sampler.

\section{Finite-difference error}\label{app:fd}
This appendix retains the original, unsmoothed gradients as the reference quantities. It covers Algorithm~\ref{alg:single} and the with-replacement minibatch update in Appendix~\ref{app:batch}.

The Hessian regularity is included in Assumption~\ref{ass:smooth}; set $H_2^2=N^{-1}\sum_iH_i^2$.
Let $D_6=d(d+2)(d+4)$ and $D_8=d(d+2)(d+4)(d+6)$. Taylor's integral remainder gives
\[
b_{i,t}=s_{i,t}^{\mu_t}-z_t^\top g_{i,t},\qquad
|b_{i,t}|\le\frac{H_i\mu_t^2}{6}\norm{z_t}^3.
\]
For $\delta_{i,t}=\sqrt{\pi/2}(|s_{i,t}^{\mu_t}|-|z_t^\top g_{i,t}|)$, define
\[
J_{i,t}=\frac{\pi H_i^2D_6}{72}\mu_t^4,\qquad
W_t=\frac{H_2^2D_8}{36}\mu_t^4,\qquad
\overline W_T=T^{-1}\sum_tW_t.
\]
The reverse triangle inequality and Gaussian radial moments imply
\begin{equation}\label{eq:fd-basic}
\E_t\delta_{i,t}^2\le J_{i,t},\qquad
\E_t\norm{b_{i,t}z_t}^2\le H_i^2D_8\mu_t^4/36.
\end{equation}

In this appendix take $B=1$ for Algorithm~\ref{alg:single}, or $B>1$ for the experimental implementation. Let $\pi_{i,t}$ denote the corresponding inclusion probability and let $r>0$ lower-bound $\beta\pi_{i,t}$ uniformly. Define
\[
\chi_B=\frac1{B\varepsilon}+1-\frac1B,\quad
A_{\pi,T}=\frac1T\sum_{t,i}\E[\pi_{i,t}a_{i,t}^2],\quad
\mathcal B_T=\frac1T\sum_{t,i}\E[\beta^2\pi_{i,t}J_{i,t}].
\]
For the main algorithm, take $r=\beta\varepsilon/N$ and $\pi_i=p_i$. For the batch extension, $\pi_{i,t}=1-(1-p_{i,t})^B$ and $r=\beta[1-(1-\varepsilon/N)^B]$.

\begin{lemma}[Outer perturbation]\label{lem:fd-update}
Let $h_t^\mu=\widehat g_t^\mu-\widehat g_t^0$, where the two estimators use the same current parameters, indices, and direction, with finite or ideal differences.~Then
\[
\norm{\E_t h_t^\mu}^2\le W_t,\qquad
\E_t\norm{h_t^\mu}^2\le\chi_B W_t.
\]
\end{lemma}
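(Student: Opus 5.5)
The proof is a direct computation. Both bounds reduce to the per-example Taylor bound $\E_t\norm{b_{i,t}z_t}^2\le H_i^2D_8\mu_t^4/36$ from Eq.~\eqref{eq:fd-basic}, using two facts: importance weighting cancels the sampling probabilities in the mean, and the floor $p_{i,t}\ge\varepsilon/N$ controls the inverse weights in the second moment. Throughout, write $b_i=b_{i,t}$, $z=z_t$, and $p_i=p_{i,t}$.

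\emph{Case $B=1$.} Since the two estimators share the index and direction, $h_t^\mu=b_{I_t}z/(Np_{I_t})$. Because $I_t$ is independent of $z$, averaging first over $I_t\sim p_t$ cancels the weights:
\[
\E_t h_t^\mu=\frac1N\sum_i\E_z[b_iz].
\]
Jensen's inequality gives $\norm{\E_t h_t^\mu}^2\le N^{-1}\sum_i\norm{\E_z b_iz}^2\le N^{-1}\sum_i\E_z\norm{b_iz}^2$. By Eq.~\eqref{eq:fd-basic} this is at most $H_2^2D_8\mu_t^4/36=W_t$. For the second moment,
\[
\E_t\norm{h_t^\mu}^2=\sum_i\frac{\E_z\norm{b_iz}^2}{N^2p_i}\le\frac1{N\varepsilon}\sum_i\frac{H_i^2D_8\mu_t^4}{36}=\frac{W_t}{\varepsilon}=\chi_1W_t,
\]
using $p_i\ge\varepsilon/N$.

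\emph{Case $B>1$.} I take the weighted minibatch update of Appendix~\ref{app:batch}. There are $B$ i.i.d.\ draws $I^{(1)},\dots,I^{(B)}\sim p_t$, kept with multiplicity, sharing the direction $z$. Then $h_t^\mu=B^{-1}\sum_kX_k$ with $X_k=b_{I^{(k)}}z/(Np_{I^{(k)}})$. Conditionally on $z$, the $X_k$ are i.i.d.\ with conditional mean $\bar X(z)=N^{-1}\sum_ib_iz$. The mean computation is therefore identical to the case $B=1$ and gives $\norm{\E_th_t^\mu}^2\le W_t$. For the second moment I use the conditional variance decomposition:
\[
\E\Bigl[\norm{h_t^\mu}^2\Bigm| z\Bigr]=\frac1B\E\Bigl[\norm{X_1}^2\Bigm| z\Bigr]+\Bigl(1-\frac1B\Bigr)\norm{\bar X(z)}^2.
\]
After taking $\E_z$, the first term is bounded by $W_t/(B\varepsilon)$ exactly as before. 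For the second term, Jensen gives $\norm{\bar X(z)}^2\le N^{-1}\sum_i\norm{b_iz}^2$, whose expectation is at most $W_t$. Adding the two terms yields $\chi_BW_t$.

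The step needing the most care is the correlation created by the shared direction. The draws are independent only conditionally on $z$, so the cross terms do not vanish. The decomposition above handles them by bounding the cross terms through $\norm{\bar X(z)}^2$ rather than discarding them. This is exactly what produces the $1-1/B$ term in $\chi_B$ instead of a pure $1/(B\varepsilon)$ factor.
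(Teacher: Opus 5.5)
Your proof is correct and follows the paper's argument. You condition on the shared direction so that the importance weights cancel in the conditional mean $N^{-1}\sum_i b_i z$, and apply Jensen for the first bound. For the second bound you use the with-replacement decomposition into $B^{-1}N^{-2}\sum_i\norm{b_iz}^2/p_i$ plus $(1-1/B)\norm{N^{-1}\sum_i b_iz}^2$, together with $p_i\ge\varepsilon/N$ and Eq.~\eqref{eq:fd-basic}. The only cosmetic difference is in the mean bound: the paper goes through $(N^{-1}\sum_i H_i)^2\le H_2^2$, whereas you apply Jensen over the index average first and reach $H_2^2$ directly, which is equivalent.
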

\begin{proof}
Conditional on $z$, the with-replacement sample average has expectation $N^{-1}\sum_i b_i z$. Jensen's inequality and $(N^{-1}\sum_i H_i)^2\le H_2^2$ prove the first bound. For $B$ independent with-replacement indices, the conditional second moment consists of $B^{-1}N^{-2}\sum_i\norm{b_i z}^2/p_i$ plus $(1-1/B)\norm{N^{-1}\sum_i b_i z}^2$. Use $p_i\ge\varepsilon/N$ and Eq.~\eqref{eq:fd-basic}.
\end{proof}

\begin{theorem}[Explicit finite-radius bound]\label{thm:finite}
Under Assumption~\ref{ass:smooth}, use the with-replacement update in Eq.~\eqref{eq:batch-update} and the once-per-included-example EMA in Eq.~\eqref{eq:batch-ema}, with $0<\varepsilon\le1/2$ and $0<\beta<1$. Let
\[
\kappa_\mu=\frac{192\Lambda_d\eta^2}{B\varepsilon r^2}<1.
\]
Along the actual finite-difference trajectory, define $\widetilde{\Reg}_T,\widetilde{\D}_T$ using the original gradient norms using the normalization above and Eq.~\eqref{eq:complexities}.~Then
\begin{equation}\label{eq:finite-regret}
\begin{split}
\frac{\widetilde{\Reg}_T}{T}\le\frac1{1-\kappa_\mu}\bigg[
&\frac{16E_0}{TN\varepsilon r}
+\frac{24\sigma_G^2\beta^2}{N\varepsilon r}A_{\pi,T}
+\frac{96\mathcal B_T}{N\varepsilon r^2}\\
&+\left(\frac{192\Lambda_d\eta^2}{\varepsilon r^2}+4\varepsilon\right)\widetilde{\D}_T
+\frac{192L_2^2\eta^2\chi_B}{\varepsilon r^2}\overline W_T\bigg].
\end{split}
\end{equation}
If $C_d\eta(1-1/B)\le1/4$,~then
\begin{equation}\label{eq:finite-outer}
\Psi_T\le\frac{4\Delta_F}{\eta T}
+\frac{4C_d\eta}{B}(\widetilde{\D}_T+\widetilde{\Reg}_T/T)
+(2+4L_F\eta\chi_B)\overline W_T.
\end{equation}
\end{theorem}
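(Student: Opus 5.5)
The plan is to repeat the three-stage argument already sketched for Theorem~\ref{thm:tracking}: sparse-EMA tracking, then normalization, then closing the drift. The new ingredients are the batch inclusion probabilities $\pi_{i,t}$ and the finite-difference perturbations of Lemma~\ref{lem:fd-update}.

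\textbf{Step 1: tracking with finite-difference observations.} Rerun the proof of Lemma~\ref{lem:sparse} with the actual observation $Y_{i,t}=Y_{i,t}^0+\delta_{i,t}$. The error recursion becomes $e_{i,t+1}=X_i+\beta O_i\delta_{i,t}-\delta a_{i,t}$, where $X_i$ is the leading-order term. I will split off the two perturbations with a three-way Young inequality with parameter $r/2$, which is why the drift constant is taken as 12. The cross term $\beta O_i\delta_{i,t}$ contributes at most $\beta^2\pi_{i,t}J_{i,t}/r$ by Eq.~\eqref{eq:fd-basic}. The once-per-included-example EMA keeps $O_i$ an indicator with $\E_tO_i=\pi_{i,t}$, so the contraction factor $(1-r)$ survives. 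Summing gives the analogue of Eq.~\eqref{eq:mse-outline}:
\[
\sum_t\E\norm{e_t}^2\le\frac{2E_0}{r}+\frac{3\sigma_G^2\beta^2}{r}TA_{\pi,T}+\frac{12(Q_T+T\mathcal B_T)}{r^2}.
\]

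\textbf{Step 2: normalization and drift.} Lemma~\ref{lem:normalization}, applied pointwise and summed, converts the tracking error into
\[
\widetilde\Reg_T\le\frac{8}{N\varepsilon}\sum_t\E\norm{e_t}^2+4\varepsilon T\widetilde\D_T .
\]
For the drift, $Q_T\le NL_2^2\eta^2\sum_t\E\norm{\widehat g_t}^2$. I write $\widehat g_t=\widehat g^0_t+h_t$ and use $\norm{u+v}^2\le2\norm u^2+2\norm v^2$ together with Lemma~\ref{lem:fd-update}, which gives $\E_t\norm{h_t}^2\le\chi_BW_t$. The batch leading second moment is $(d+2)\widetilde V_t(p_t)/B+(1-1/B)\norm{\E\widehat g^0_t}^2$, and I bound it by $(d+2)(\widetilde\D_t^\star+\widetilde\rho_t)/B$ plus a mean-square term. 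This produces
\[
Q_T\le\frac{2N\Lambda_d\eta^2}{B}\left(T\widetilde\D_T+\widetilde\Reg_T\right)+2NL_2^2\eta^2\chi_BT\overline W_T .
\]
Substituting, the coefficient of $\widetilde\Reg_T$ on the right is exactly $\kappa_\mu$, so absorption yields Eq.~\eqref{eq:finite-regret}.

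The main obstacle is the $(1-1/B)\norm{\nabla F}^2$ piece of the batch second moment. This piece is not controlled by $\widetilde V_t$, and I would try first to bound it crudely by $\widetilde V_t(p_t)$, since $\norm{N^{-1}\sum g_i}^2\le\widetilde V_t$ by Cauchy--Schwarz. Doing so costs only the constant factor absorbed into $192$ versus the $1/B$ term. If the constants do not close, I would keep the exponent on $B$ only on the leading part, as the stated $\widetilde\D_T$ coefficient, which carries no $B$, suggests.

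\textbf{Step 3: descent.} Apply $L_F$-smoothness with $\E_t\widehat g_t=\nabla F(x_t)+\E_th_t$. The bias term is handled by
\[
\eta\norm{\nabla F}\sqrt{W_t}\le\tfrac\eta4\norm{\nabla F}^2+\eta W_t .
\]
The second moment is bounded by
\[
2\left[(d+2)\widetilde V_t/B+(1-1/B)\norm{\nabla F}^2\right]+2\chi_BW_t .
\]
Under $C_d\eta(1-1/B)\le1/4$, the $\norm{\nabla F}^2$ portion of the second moment is absorbed. That leaves at least $\eta/2\cdot\norm{\nabla F}^2$ on the left (after the Young step). Telescoping, dividing by $\eta T/4$, and using $\widetilde V_t=\widetilde V_t^\star+\widetilde\rho_t$ gives Eq.~\eqref{eq:finite-outer}. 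The constants here are loose, so I expect no difficulty in this step beyond bookkeeping.
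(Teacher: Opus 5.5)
\textbf{The gap is in Step 2, at exactly the point you flagged.} Steps 1 and 3 follow the paper's argument, and your Step 3 even gives smaller constants than Eq.~\eqref{eq:finite-outer}.

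The leading batch second moment is $(d+2)\bigl[\widetilde V_t(p_t)/B+(1-1/B)\norm{\nabla F(x_t)}^2\bigr]$, with the factor $d+2$ on both terms (Eq.~\eqref{eq:batch-second}). In Step 3, that factor is what makes $C_d\eta(1-1/B)\le1/4$ the right absorption condition; with it restored, Step 3 still goes through.

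In Step 2, your displayed $Q_T$ bound simply drops the $(1-1/B)\norm{\nabla F}^2$ term. The moment bound it would need, with $1/B$ on the oracle part, is false. For $N=1$ all $B$ draws coincide, and $\E_t\norm{\widehat g_t}^2=(d+2)a_t^2+O(\mu^2)$ regardless of $B$. The shared-direction contribution therefore cannot carry $1/B$ on $\widetilde\D_T$.

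Your proposed repair, $\norm{\nabla F(x_t)}^2\le\widetilde V_t(p_t)$, collapses the bracket to $\widetilde V_t(p_t)=\bar a_t^2+\widetilde\rho_t$. That gives $Q_T\le2N\Lambda_d\eta^2(T\widetilde\D_T+\widetilde\Reg_T)+\cdots$, so the coefficient of $\widetilde\Reg_T$ becomes $192\Lambda_d\eta^2/(\varepsilon r^2)=B\kappa_\mu$. This is not a constant that $192$ can absorb. Under the hypothesis $\kappa_\mu<1$ you may still have $B\kappa_\mu\ge1$, so the absorption fails and Eq.~\eqref{eq:finite-regret} does not follow.

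The missing idea is to compare $\norm{\nabla F}^2$ with the oracle cost rather than with the sampler's cost. By the triangle inequality, $\norm{\nabla F(x_t)}=\norm{N^{-1}\sum_i g_{i,t}}\le\bar a_t$, so $\norm{\nabla F(x_t)}^2\le\widetilde V_t^\star$. Then
\[
\frac{\widetilde V_t(p_t)}{B}+\Bigl(1-\frac1B\Bigr)\norm{\nabla F(x_t)}^2
\le\frac{\bar a_t^2+\widetilde\rho_t}{B}+\Bigl(1-\frac1B\Bigr)\bar a_t^2
=\bar a_t^2+\frac{\widetilde\rho_t}{B}.
\]
Combining this with Lemma~\ref{lem:fd-update} and the two-term square inequality gives $Q_T\le2N\Lambda_d\eta^2(T\widetilde\D_T+\widetilde\Reg_T/B)+2NL_2^2\eta^2\chi_BT\overline W_T$. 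This one bound produces both the $B$-free coefficient $192\Lambda_d\eta^2/(\varepsilon r^2)$ on $\widetilde\D_T$ and the coefficient $\kappa_\mu$ on $\widetilde\Reg_T$. Your closing remark that the $\widetilde\D_T$ coefficient carries no $B$ points at this structure, but without this inequality the absorption step is unjustified.
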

\begin{proof}
In Lemma~\ref{lem:sparse}, the new error is $X_i+\beta O_i\delta_i-\delta a_i$. Apply Young's inequality with parameter $r/2$, then bound the square of the last two terms by twice the sum of their squares. This gives
\[
\sum_t\E\norm{m_t-a_t}^2
\le\frac{2E_0}{r}+\frac{3\sigma_G^2\beta^2}{r}T A_{\pi,T}
+\frac{12}{r^2}(T\mathcal B_T+Q_T).
\]
No centering of $\delta_i$ is assumed. By the ideal batch second moment, Lemma~\ref{lem:fd-update}, and the two-term square inequality,
\[
\E_t\norm{\widehat g_t^\mu}^2
\le2(d+2)\left[\widetilde V_t(p_t)/B+(1-1/B)\norm{\nabla F(x_t)}^2\right]+2\chi_BW_t.
\]
\[
\text{Consequently,}\quad
Q_T\le2N\Lambda_d\eta^2(T\widetilde{\D}_T+\widetilde{\Reg}_T/B)
+2NL_2^2\eta^2\chi_BT\overline W_T.
\]
Substitute into Lemma~\ref{lem:normalization} and absorb the regret coefficient to prove Eq.~\eqref{eq:finite-regret}. For descent, use $\E_t\widehat g_t^\mu=\nabla F(x_t)+b_t$ with $\norm{b_t}^2\le W_t$, and $-\ip{\nabla F}{b_t}\le\norm{\nabla F}^2/2+W_t/2$. The coefficient of $\Psi_T$ is at least $1/2-C_d\eta(1-1/B)\ge1/4$, giving Eq.~\eqref{eq:finite-outer}.
\end{proof}

\paragraph{Radius dependence in the main theorem.}
For $M_{4,T}=T^{-1}\sum_t\mu_t^4$, the explicit bounds are
\[
\mathcal B_T\le\frac{\pi N\beta^2H_2^2D_6}{72}M_{4,T},\qquad
\overline W_T=\frac{H_2^2D_8}{36}M_{4,T}.
\]
For the main algorithm, with fixed problem and schedule coefficients, bounded trajectory moments, and the horizon schedules in Theorem~\ref{cor:rate}, one has $r=\Theta(T^{-2/5})$, $\chi_B=O(T^{1/10})$, and
\[
R_{\mu,T}=O(T^{3/10}M_{4,T}+M_{4,T}),\qquad
\Omega_{\mu,T}=O(\sqrt{M_{4,T}}+T^{-2/5}M_{4,T}).
\]
The constants in these two order summaries depend on the fixed coefficients explicitly displayed above and in the main definitions. The radius-independent remainder in the main theorem hides only universal constants. Its separate $O(\mu^2)$ hides the radius coefficients displayed here. Under the stated fixed-parameter and bounded-moment conditions, $\eta R_{\mu,T}=O(T^{-1/5}\mu^4+T^{-1/2}\mu^4)$ and $\Omega_{\mu,T}=O(\mu^2)$ uniformly in feasible $T$; thus the collected $O(\mu^2)$ is uniform in those horizons. Choosing $\mu_t=\mu_0T^{-\alpha}$ for every step of a run gives $M_{4,T}=\mu_0^4T^{-4\alpha}$. For $\alpha>3/10$, both $\Omega_{\mu,T}$ and $\eta R_{\mu,T}$ are $o(T^{-3/5})$. This condition applies to a horizon-dependent radius and does not assert the same conclusion for $\mu_t=\mu_0t^{-\alpha}$ with a fixed initial radius. The earlier coarse outer bound remains useful if one prefers squared-radius errors at the price of larger leading constants. Evaluation randomness and numerical rounding are not represented by these Taylor bounds.

\section{With-replacement minibatches and shared directions}\label{app:batch}
All experiments use with-replacement minibatch sampling. This appendix specifies the implementation and derives the effect of sharing one Gaussian direction within a batch. Algorithm~\ref{alg:single} and the main convergence results are its $B=1$~case.

\subsection{Minibatch update and reused evaluations}
At step $t$, form $p_t$ from the stored scores and draw $B$ indices independently with replacement, $I_{t,1},\ldots,I_{t,B}\sim p_t$, independently of $z_t\sim\mathcal N(0,I_d)$. Let
\[
 K_{i,t}=\sum_{b=1}^B\mathbf1\{I_{t,b}=i\},
 \qquad S_t=\{i:K_{i,t}>0\}.
\]
For each $i\in S_t$, evaluate the two-sided difference only once, using the common perturbations $x_t\pm\mu_tz_t$. If an example is drawn repeatedly, its stored difference is reused: it does not trigger another forward evaluation or a replacement draw. The model update is
\begin{equation}\label{eq:batch-update}
\widehat g_t
=\frac{z_t}{B}\sum_{b=1}^B\frac{s_{I_{t,b},t}}{Np_{I_{t,b},t}}
=\frac{z_t}{BN}\sum_{i\in S_t}\frac{K_{i,t}s_{i,t}}{p_{i,t}},
\qquad x_{t+1}=x_t-\eta\widehat g_t.
\end{equation}
Thus evaluation reuse preserves the multiplicities of the original $B$ draws. Each included example updates its score~once:
\begin{equation}\label{eq:batch-ema}
 m_{i,t+1}=
 \begin{cases}
 (1-\beta)m_{i,t}+\beta\sqrt{\dfrac\pi2}|s_{i,t}|,&i\in S_t,\\[3pt]
 m_{i,t},&i\notin S_t.
 \end{cases}
\end{equation}
The distribution remains fixed within a batch; the new scores determine $p_{t+1}$. No inverse-probability factor is applied to the EMA. The two perturbed batches contain $|S_t|$ examples each, requiring $2|S_t|\le2B$ example-level function evaluations. The expected number of distinct examples is $\sum_i[1-(1-p_{i,t})^B]$.

\subsection{Effect of the shared direction on variance}
For fixed $x_t,p_t,z_t$, let $\bar s_t=N^{-1}\sum_i s_{i,t}$. Independence of the index draws gives the exact finite-difference identities
\begin{align}
 \E_{I\mid z_t}\widehat g_t&=\bar s_tz_t,\label{eq:batch-conditional-mean}\\
 \E_{I\mid z_t}\norm{\widehat g_t}^2
 &=\norm{z_t}^2\left[
 \frac{1}{BN^2}\sum_i\frac{s_{i,t}^2}{p_{i,t}}
 +\left(1-\frac1B\right)\bar s_t^2\right].\label{eq:batch-conditional-second}
\end{align}
Hence sharing a direction and reusing duplicate evaluations preserve the expected full-data finite-difference update. The differences for different examples need not be independent.

Write $\bar g_t=\nabla F(x_t)$. Under Assumption~\ref{ass:smooth}, integrating over the shared Gaussian direction yields
\begin{proposition}[Shared-direction minibatch moments]\label{prop:batch-moments}
For the update in Eq.~\eqref{eq:batch-update},
\begin{equation}\label{eq:batch-second}
\begin{gathered}
 \E_t\widehat g_t=\bar g_t+O(\mu^2),\\[4pt]
 \E_t\norm{\widehat g_t}^2
 =(d+2)\left[\frac{\widetilde V_t(p_t)}B+
 \left(1-\frac1B\right)\norm{\bar g_t}^2\right]+O(\mu^2).
\end{gathered}
\end{equation}
Its total variance is
\begin{equation}\label{eq:batch-variance}
 \E_t\norm{\widehat g_t-\E_t\widehat g_t}^2
 =(d+1)\norm{\bar g_t}^2
 +\frac{d+2}{B}\left[\widetilde V_t(p_t)-\norm{\bar g_t}^2\right]+O(\mu^2).
\end{equation}
The radius terms follow the convention in Appendix~\ref{app:fd}; the displayed leading terms are exact for directional derivatives.
\end{proposition}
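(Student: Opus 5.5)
The plan is to condition on the shared direction $z_t$ first and use the exact identities \eqref{eq:batch-conditional-mean} and \eqref{eq:batch-conditional-second}. Then I integrate over $z_t\sim\mathcal N(0,I_d)$ after splitting each difference as $s_{i,t}=z_t^\top g_{i,t}+b_{i,t}$. The Gaussian leading terms give the displayed expressions. The remainders $b_{i,t}$ are controlled by Lemma~\ref{lem:fd-update} and Eq.~\eqref{eq:fd-basic}, which produce the $O(\mu^2)$ terms.

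\textbf{Mean.} By \eqref{eq:batch-conditional-mean}, $\E_t\widehat g_t=\E_z[\bar s_tz_t]$. With $\bar s_t=z_t^\top\bar g_t+N^{-1}\sum_ib_{i,t}$ and $\E zz^\top=I_d$, the leading part equals $\bar g_t$. The remainder is $\E_z[N^{-1}\sum_ib_{i,t}z_t]$, whose squared norm is at most $W_t=O(\mu^4)$ by the first bound of Lemma~\ref{lem:fd-update}. This gives the first line of \eqref{eq:batch-second}.

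\textbf{Second moment.} I take the expectation of \eqref{eq:batch-conditional-second} term by term, using the identity from the Gaussian appendix: for fixed $g$,
\[
\E\bigl[\norm{z}^2(z^\top g)^2\bigr]=(d+2)\norm g^2 .
\]
Applied to $g_{i,t}$, the first bracketed term yields $(d+2)B^{-1}\widetilde V_t(p_t)$. Applied to $\bar g_t$, the second yields $(d+2)(1-1/B)\norm{\bar g_t}^2$. The cross and remainder terms come from expanding $s_{i,t}^2=(z^\top g_i)^2+2(z^\top g_i)b_i+b_i^2$. The $b_i^2$ parts are bounded through $\E\norm{b_iz}^2\le H_i^2D_8\mu^4/36$ together with $p_i\ge\varepsilon/N$; this is exactly the $\chi_BW_t$ bound of Lemma~\ref{lem:fd-update}. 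The cross parts I bound by Cauchy--Schwarz as $\sqrt{\text{leading}\cdot\chi_BW_t}$. This makes them $O(\mu^2)$ with coefficients depending on $p_t$ and the gradient norms, consistent with the paper's radius convention.

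\textbf{Variance.} The last step is
\[
\E_t\norm{\widehat g_t-\E_t\widehat g_t}^2=\E_t\norm{\widehat g_t}^2-\norm{\E_t\widehat g_t}^2 ,
\]
where $\norm{\E_t\widehat g_t}^2=\norm{\bar g_t}^2+O(\mu^2)$. Collecting terms,
\[
(d+2)\bigl(1-\tfrac1B\bigr)\norm{\bar g_t}^2-\norm{\bar g_t}^2+\frac{d+2}{B}\widetilde V_t(p_t)=(d+1)\norm{\bar g_t}^2+\frac{d+2}{B}\bigl[\widetilde V_t(p_t)-\norm{\bar g_t}^2\bigr],
\]
which is \eqref{eq:batch-variance}.

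\textbf{Main obstacle.} The only delicate point is the cross terms between the Gaussian projections and the Taylor remainders. They are only $O(\mu^2)$, not $O(\mu^4)$, and need the $\chi_B$ factor from exploration. I would handle them uniformly by Cauchy--Schwarz, exactly as in the descent step of the appendix. Finally, the leading terms are exact when the $s_{i,t}$ are directional derivatives, that is, when every $b_{i,t}=0$.
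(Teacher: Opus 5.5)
Your proposal is correct and follows the paper's proof essentially step for step. Both arguments expand $s_{i,t}=z_t^\top g_{i,t}+b_{i,t}$ inside the conditional identities \eqref{eq:batch-conditional-mean}--\eqref{eq:batch-conditional-second}, apply $\E[zz^\top]=I_d$ and $\E[\norm{z}^2(z^\top v)^2]=(d+2)\norm{v}^2$, control cross terms by Cauchy--Schwarz and squared remainders by the finite-radius moment bounds, and subtract $\norm{\E_t\widehat g_t}^2$. Citing Lemma~\ref{lem:fd-update} to make the remainder constants explicit ($W_t$ for the mean, $\chi_BW_t$ for the squared remainder) is simply a concrete instance of the paper's appeal to those bounds, and it involves no circularity.
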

\begin{proof}
The central-difference remainder satisfies $s_{i,t}=z_t^\top g_{i,t}+b_{i,t}$ with $|b_{i,t}|\le H_i\mu_t^2\norm{z_t}^3/6$. Applying this expansion directly to Eqs.~\eqref{eq:batch-conditional-mean}--\eqref{eq:batch-conditional-second}, the Gaussian identities
\[
 \E[zz^\top]=I_d,
 \qquad \E[\norm{z}^2(z^\top v)^2]=(d+2)\norm{v}^2
\]
give the displayed leading terms. Cross terms involving $b_{i,t}$ are $O(\mu^2)$ by Cauchy--Schwarz and the finite-radius moment bounds; squared remainders are $O(\mu^4)$. Subtracting $\norm{\E_t\widehat g_t}^2=\norm{\bar g_t}^2+O(\mu^2)$ gives Eq.~\eqref{eq:batch-variance}.
\end{proof}

\paragraph{Advantage over MeZO under the same shared direction.}
MeZO also shares one Gaussian direction across a minibatch. Its leading directional variance is therefore the same $(d+1)\norm{\bar g_t}^2$ as SAZO's; the allocation controls the remaining term. At the same parameter vector and batch size, write $p^{\mathrm{unif}}=\mathbf1/N$. The variance reduction relative to MeZO is
\begin{equation}\label{eq:batch-mezo-gap}
 \operatorname{Var}_t(\widehat g_t^{\mathrm{MeZO}})
 -\operatorname{Var}_t(\widehat g_t^{\mathrm{SAZO}})
 =\frac{d+2}{B}\left[\widetilde V_t(p^{\mathrm{unif}})-\widetilde V_t(p_t)\right]+O(\mu^2),
\end{equation}
where $\operatorname{Var}_t(X)=\E_t\norm{X-\E_tX}^2$. For the norm-proportional oracle, the leading gain is
\begin{equation}\label{eq:batch-oracle-gain}
 \frac{d+2}{B}\left[\widetilde V_t(p^{\mathrm{unif}})-\bar a_t^2\right]
 =\frac{d+2}{BN}\sum_i(a_{i,t}-\bar a_t)^2.
\end{equation}
Thus unequal gradient norms give a strictly positive leading variance reduction. The common direction term cancels from the comparison: sharing a direction does not introduce an additional variance penalty for SAZO relative to MeZO. For the learned allocation, the leading gain equals the oracle gain minus $(d+2)\widetilde{\rho}_t/B$, directly connecting the benefit to the tracking analysis.

Larger batches reduce both methods' sampling variance by $1/B$, while norm-proportional allocation minimizes this term for every batch size. SAZO exploits sample heterogeneity within the same two perturbed model states used by MeZO, and duplicate evaluation reuse reduces the work needed for its $B$ draws.

\subsection{Consequences for tracking and convergence}\label{app:batch-tracking}
Because an included example updates its EMA once, its refresh event is $O_{i,t}=\mathbf1\{K_{i,t}>0\}$. Conditional on the history, the probability that all $B$ independent draws miss example $i$ is $(1-p_{i,t})^B$. Thus its inclusion probability and the resulting uniform refresh bound are
\begin{equation}\label{eq:batch-refresh}
 \pi_{i,t}=1-(1-p_{i,t})^B,
 \qquad r_B=\beta\left[1-\left(1-\frac\varepsilon N\right)^B\right].
\end{equation}
The floor $p_{i,t}\ge\varepsilon/N$ gives $\beta\pi_{i,t}\ge r_B$. Relative to the $B=1$ rate $r=\beta\varepsilon/N$, one has $r\le r_B\le\min\{Br,\beta\}$ and $r_B=Br[1+O(B\varepsilon/N)]$ when $B\varepsilon/N$ is small.

To connect inclusion to tracking contraction, set $e_{i,t}=m_{i,t}-a_{i,t}$ and use the leading Gaussian observation $Y_{i,t}^0$ from Appendix~\ref{app:proofs}. Independence of the index draws and the fresh direction gives
\[
\E_t\!\left[\bigl(e_{i,t}+\beta O_{i,t}(Y_{i,t}^0-m_{i,t})\bigr)^2\right]
\le(1-r_B)e_{i,t}^2+\beta^2\pi_{i,t}\sigma_G^2a_{i,t}^2,
\]
since $(2\beta-\beta^2)\pi_{i,t}\ge\beta\pi_{i,t}\ge r_B$. Sharing $z_t$ correlates the observations across examples, but Lemma~\ref{lem:sparse} controls a sum of coordinatewise squared errors and does not require independence across~examples. Its argument therefore applies with $r=r_B$; the moving target and finite-difference remainder are included in Theorem~\ref{thm:finite}. Each included example receives only one directional observation, and updating its EMA once avoids treating duplicate draws as fresh evidence.

To see how the shared-direction term enters optimization, average Eq.~\eqref{eq:batch-second} in the smoothness descent inequality. With $C_d=L_F(d+2)$ and $\Psi_T=T^{-1}\sum_t\E\norm{\nabla F(x_t)}^2$, this gives
\begin{equation}\label{eq:batch-descent}
 \left[1-\frac{C_d\eta}{2}\left(1-\frac1B\right)\right]\Psi_T
 \le\frac{\Delta_F}{\eta T}
 +\frac{C_d\eta}{2B}\left(\widetilde{\D}_T+\frac{\widetilde{\Reg}_T}{T}\right)+O(\mu^2).
\end{equation}
For $C_d\eta(1-1/B)<2$, the coefficient on the left is positive. This coefficient is common to SAZO and a uniform shared-direction update. Along a given parameter trajectory, uniform sampling contributes $\widetilde{\U}_T/B$ to the right-hand sampling term, whereas SAZO contributes $(\widetilde{\D}_T+\widetilde{\Reg}_T/T)/B$. The same dynamic-oracle improvement established in the main analysis therefore persists in the minibatch bound. Setting $B=1$ recovers the main descent lemma.

The same distinction matters for target drift. The coarse moment bound with an $O(\mu^4)$ remainder gives
\[
 Q_T\le2N\Lambda_d\eta^2\left(T\widetilde{\D}_T+\frac{\widetilde{\Reg}_T}{B}\right)+O(T\mu^4),
\]
using $\norm{\bar g_t}^2\le \bar a_t^2$. Combining this inequality with the $12Q_T/r_B^2$ term in Lemma~\ref{lem:sparse} and the factor $8/(N\varepsilon)$ in Lemma~\ref{lem:normalization}, the coefficient of regret on the right-hand side is
\begin{equation}\label{eq:batch-coupling}
\kappa_B=\frac{192\Lambda_d\eta^2}{B\varepsilon r_B^2},
\qquad
\frac{\kappa_B}{\kappa}
=\frac1B\left[\frac{\varepsilon/N}{1-(1-\varepsilon/N)^B}\right]^2.
\end{equation}
Here $\kappa$ is the $B=1$ coefficient at the same $\eta,\varepsilon,\beta$. Thus $\kappa_B/\kappa=B^{-3}[1+O(B\varepsilon/N)]$ when $B\varepsilon/N$ is small: sampling reduces the regret-dependent update moment by $1/B$, and the increased refresh rate reduces its tracking contribution by a further factor of approximately $B^2$. Theorem~\ref{thm:finite}, with $r=r_B$ and $\kappa_\mu=\kappa_B$, combines these effects with the remaining noise, exploration, and finite-difference terms. Its $B=1$ specialization recovers the main convergence analysis.

\section{Experimental settings}\label{app:experimental-settings}
\subsection{Detailed experiments on RoBERTa-large}\label{app:roberta-intro}
We use the hyperparameters in Table~\ref{tab:roberta-hyperparameters} for SAZO experiments on RoBERTa-large. Regarding learning-rate scheduling and early stopping, we use a constant learning rate for all SAZO experiments. All SAZO experiments use $\beta=0.05$, $\varepsilon=0.1$, and four burn-in passes. For Figure~\ref{fig:intro-loss}(a), we select the optimal constant learning rate for each method from $\{1,5\}\times\{10^{-7},10^{-6},10^{-5}\}$; all other settings match the corresponding full-parameter experiments in Table~\ref{tab:main}.

\begin{table}[!htbp]
\centering\small
\caption{Hyperparameter grids used for SAZO experiments on RoBERTa-large.}\label{tab:roberta-hyperparameters}
\begin{tabular}{lll}
\toprule
Experiment & Hyperparameter & Values\\
\midrule
\multirow{4}{*}{SAZO}
  & Batch size & 64\\
  & Learning rate & $\{10^{-7},10^{-6},10^{-5}\}$\\
  & Perturbation radius $\mu$ & $10^{-3}$\\
  & Weight decay & 0\\
\midrule
\multirow{5}{*}{SAZO (prefix)}
  & Batch size & 64\\
  & Learning rate & $\{10^{-2},5\times10^{-3},10^{-3}\}$\\
  & Perturbation radius $\mu$ & $10^{-1}$\\
  & Weight decay & 0\\
  & Prefix tokens & 5\\
\midrule
\multirow{5}{*}{SAZO (LoRA)}
  & Batch size & 64\\
  & Learning rate & $\{10^{-5},5\times10^{-5},10^{-4}\}$\\
  & Perturbation radius $\mu$ & $10^{-3}$\\
  & Weight decay & 0.1\\
  & $(r,\alpha)$ & $(8,16)$\\
\bottomrule
\end{tabular}
\end{table}

\subsection{Detailed results on various LLMs}\label{app:extended-models}\label{app:llama-intro}
We use the hyperparameters in Table~\ref{tab:glm-hyperparameters} for SAZO experiments on GLMs. For Figure~\ref{fig:intro-loss}(b,c), we select the optimal constant learning rate for each method and model from $\{1,5\}\times\{10^{-7},10^{-6},10^{-5}\}$; all other settings match the corresponding experiments in Table~\ref{tab:opt13b}.

\begin{table}[!htbp]
\centering\small
\caption{Hyperparameter grids used for GLM experiments. All weight decay is set to 0. SAZO uses 20K steps and constant learning rates.}\label{tab:glm-hyperparameters}
\begin{tabular}{lll}
\toprule
Experiment & Hyperparameter & Values\\
\midrule
\multirow{3}{*}{SAZO}
  & Batch size & 16\\
  & Learning rate & $\{10^{-6},5\times10^{-7},10^{-7}\}$\\
  & Perturbation radius $\mu$ & $10^{-3}$\\
\bottomrule
\end{tabular}
\end{table}
\FloatBarrier

\subsection{Training-population scaling settings}\label{app:population-scaling}
Figure~\ref{fig:population-scaling} uses full-parameter RoBERTa-large on SST-2 with training populations $N=1{,}024$, $2N=2{,}048$, $3N=3{,}072$, and $4N=4{,}096$. MeZO and SAZO use batch size 16, constant learning rate $10^{-6}$, and 100K optimization steps. All learning-rate and checkpoint choices are fixed or validation-only; the test set is never used for selection. We report test accuracy averaged over seeds 13, 21, and 42, with population standard deviations as error bars.

\subsection{Ablation experiment details}\label{app:hyperparameter-ablation}
The ablations use one seed of full-parameter RoBERTa-large on SST-2 with $k=512$, a fixed 1,024-example training population, batch size 64, constant learning rate $3\times10^{-6}$, perturbation radius $10^{-3}$, and zero weight decay. We initialize the importance scores by averaging directional magnitudes over four passes, then use sampling with replacement and inverse-probability-weighted parameter updates. The $\beta$ sweep in Figure~\ref{fig:hyperparameter-ablation} uses $\beta\in\{0.005,0.05,0.5,1\}$ at fixed $\varepsilon=0.1$. The exploration sweep uses $\varepsilon\in\{0.05,0.1,0.5,1\}$ at fixed $\beta=0.05$. Each plotted point is a forward-only mean-loss audit over the complete training population; audit queries are excluded from the update count. We retain every measured audit and apply neither curve smoothing nor extrapolation.

Figure~\ref{fig:epsilon-ablation} shows stable optimization across $\varepsilon\in\{0.05,0.1,0.5\}$, indicating that SAZO is robust to substantial changes in the exploration rate. At the extreme $\varepsilon=1$, the adaptive component vanishes and sampling becomes uniform; this trajectory eventually diverges under the same learning rate.

\begin{figure}[!htbp]
\centering
\includegraphics[width=.78\linewidth]{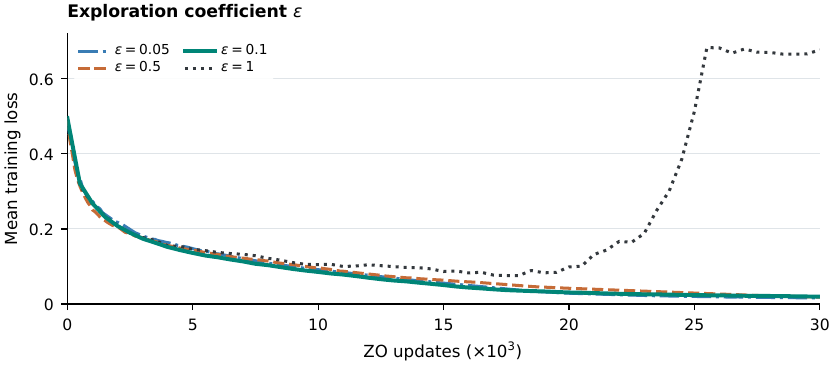}
\caption{Exploration-coefficient ablation on SST-2. SAZO remains stable over $\varepsilon\in\{0.05,0.1,0.5\}$, whereas $\varepsilon=1$, which reduces to uniform sampling, becomes unstable.}
\label{fig:epsilon-ablation}
\end{figure}
\FloatBarrier

\subsection{Sample-importance diagnostics}\label{app:heterogeneity}
The heterogeneity experiments use full-parameter RoBERTa-large under uniform MeZO training, with $k=512$, batch size 64, learning rate $10^{-6}$, perturbation radius $10^{-3}$, and zero weight decay. At frozen checkpoints, we compute exact per-example gradient norms over the trainable parameters. These gradients are used only for diagnostics. The oracle allocation is $p_{i,t}^\star=a_{i,t}/\sum_j a_{j,t}$, and relative importance is $Np_{i,t}^\star$. Top-10\% sets are recomputed at each checkpoint; temporal changes are measured using Spearman rank correlation. For the SST-2 examples, the prompt appends \texttt{It was <mask>.}, with \texttt{great} and \texttt{terrible} as the positive and negative verbalizers.

Figure~\ref{fig:intro-narrative}(a) reports the exact audits at initialization, 10K, and 30K updates. We divide all three checkpoints by the median initial norm and fix the horizontal order using the initial norms, so a given position refers to the same training example at every checkpoint. Every example is shown as a point, without smoothing or aggregation.

\begin{figure}[!htbp]
\centering
\includegraphics[width=.65\linewidth]{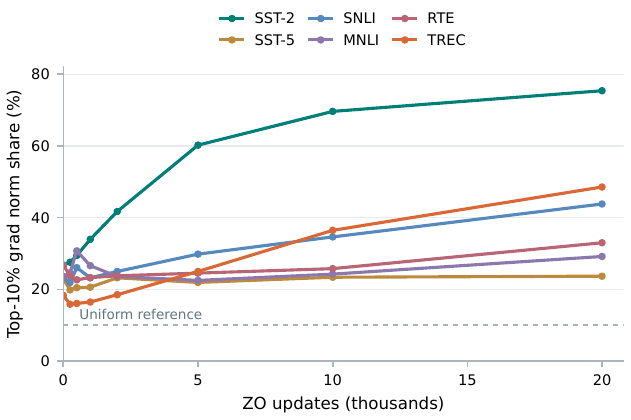}
\caption{\textbf{Gradient-norm concentration across tasks.} The top 10\% of examples account for an increasing share of the total gradient norm across six tasks. The top set is recomputed at each checkpoint; the dashed line marks the approximately 10\% uniform reference. Exact gradients are measured at frozen RoBERTa-large MeZO checkpoints.}\label{fig:multitask-concentration}
\end{figure}

\paragraph{Length and label controls.}\label{app:heterogeneity-controls}
We group examples by label or by the joint pair of prompt token count and label, and optimize the allocation subject to equal probabilities within each group. For a partition $\mathcal G$, the optimal group-restricted distribution is
\[
 b_{G,t}=\left(\frac{1}{|G|}\sum_{i\in G}a_{i,t}^2\right)^{1/2},
 \qquad
 p_{i,t}^{\mathcal G}=\frac{b_{G(i),t}}{\sum_{G\in\mathcal G}|G|b_{G,t}}.
\]
The fraction of the uniform-to-oracle variance gap remaining is
$[\widetilde V_t(p_t^{\mathcal G})-\widetilde V_t(p_t^\star)]/[\widetilde V_t(\mathbf1/N)-\widetilde V_t(p_t^\star)]$, evaluated at 20K updates.

\paragraph{Online versus fixed sampling.}\label{app:regret-settings}
The SST-2 comparison uses a common initial model, batch size 64, perturbation radius $10^{-3}$, and 30K updates. We scan constant learning rates in $\{10^{-6},3\times10^{-6}\}$. SAZO uses $\beta\in\{0.025,0.05,0.075\}$ and $\varepsilon=0.1$. The fixed sampler retains the initial exact gradient-norm-proportional distribution $p_0^\star$ without an exploration mixture; uniform sampling uses $1/N$. At each audit, the normalized instantaneous regret is
\begin{equation}\label{eq:empirical-regret}
 r_t=\frac{\widetilde V_t(p_t)-\widetilde V_t(p_t^\star)}{\widetilde V_t(p_t^\star)}.
\end{equation}
Figure~\ref{fig:intro-narrative}(b) reports relative cumulative regret,
\begin{equation}\label{eq:empirical-cumulative-regret}
 \frac{\int_0^T [\widetilde V_t(p_t)-\widetilde V_t(p_t^\star)]\,dt}
 {\int_0^T \widetilde V_t(p_t^\star)\,dt}.
\end{equation}
We integrate the numerator and denominator separately using the trapezoidal rule on all 61 recorded audits, spaced 500 updates apart from 0 to 30K, and then take their ratio. This differs from integrating $r_t$. The undefined ratio at $T=0$ is omitted from the plot. Exact gradients at each method's current parameters define its reference oracle. Loss is evaluated over the complete training population.

\input{sections/experiment_details}

%% file: sections/theory_details.tex
\subsection{Supporting moment and descent bounds}
The following results use the normalized quantities defined above. Multiplying the normalized regret bound by $d+2$ gives Theorem~\ref{thm:tracking}; substituting $C_d=L_F(d+2)$ and $\widetilde\D_T=\D_T/(d+2)$ gives Theorem~\ref{cor:rate}.

\begin{proposition}[Feedback and update moments]\label{prop:moments}
For $\sigma_G^2=\pi/2-1$, the finite-difference observations satisfy
\begin{equation}\label{eq:obs-moments}
\begin{gathered}
\E_tY_{i,t}=a_{i,t}+O(\mu^2),\\[3pt]
\operatorname{Var}_t(Y_{i,t})=\sigma_G^2a_{i,t}^2+O(\mu^2).
\end{gathered}
\end{equation}
The corresponding update satisfies
\begin{equation}\label{eq:update-moments}
\begin{gathered}
\norm{\E_t\widehat g_t-\nabla F(x_t)}=O(\mu^2),\\[3pt]
\E_t\norm{\widehat g_t}^2\le2(d+2)\widetilde V_t(p_t)+O(\mu^4).
\end{gathered}
\end{equation}
\end{proposition}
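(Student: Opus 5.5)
The plan is to work conditionally on $\mathcal F_t$, so that $x_t,m_t,p_t$ are fixed and only $I_t$ and $z_t$ are random, and to split every finite difference as $s_{i,t}=s^0_{i,t}+b_{i,t}$. Here $s^0_{i,t}=z_t^\top g_{i,t}$, and Appendix~\ref{app:fd} gives the Taylor bound $|b_{i,t}|\le H_i\mu_t^2\norm{z_t}^3/6$. Each claimed moment then reduces to an exact Gaussian computation for the leading part plus a remainder controlled by Gaussian radial moments. The leading part is $\E|Z|=\sqrt{2/\pi}$, $\E Z^2=1$ for the observations and $\E[\norm z^2(z^\top g)^2]=(d+2)\norm g^2$ for the update. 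The radial moments are $\E\norm z^6=D_6$ and $\E\norm z^8=D_8$.

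\textbf{Observation moments.} Since $s^0_{i,t}\sim a_{i,t}Z$, the variable $Y^0_{i,t}=\sqrt{\pi/2}|s^0_{i,t}|$ has mean $a_{i,t}$ and second moment $(\pi/2)a_{i,t}^2$. Its variance is therefore exactly $\sigma_G^2a_{i,t}^2$. Write $Y_{i,t}=Y^0_{i,t}+\delta_{i,t}$. The reverse triangle inequality gives $|\delta_{i,t}|\le\sqrt{\pi/2}|b_{i,t}|$, so by Eq.~\eqref{eq:fd-basic} $\E_t\delta_{i,t}^2\le J_{i,t}=O(\mu^4)$. Jensen then gives $|\E_t\delta_{i,t}|\le\sqrt{J_{i,t}}=O(\mu^2)$, which proves the mean claim. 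For the variance I would expand
\[
\operatorname{Var}(Y^0+\delta)=\operatorname{Var}(Y^0)+2\operatorname{Cov}(Y^0,\delta)+\operatorname{Var}(\delta)
\]
and bound the covariance by Cauchy--Schwarz, giving $2\sigma_G a_{i,t}\sqrt{J_{i,t}}+J_{i,t}=O(\mu^2)$. This is why the hidden coefficient involves $a_{i,t}\sqrt{J_{i,t}}$, as remarked in the appendix preamble.

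\textbf{Update moments.} Write $\widehat g_t=\widehat g^0_t+h_t$ with $h_t=b_{I_t,t}z_t/(Np_{I_t,t})$. Averaging over $I_t\sim p_t$ cancels the importance weights in the mean, so
\[
\E_t\widehat g^0_t=\frac1N\sum_i\E[zz^\top]g_{i,t}=\nabla F(x_t)
\qquad\text{and}\qquad
\E_t h_t=\frac1N\sum_i\E[b_{i,t}z_t].
\]
Applying Lemma~\ref{lem:fd-update} with $B=1$ gives $\norm{\E_th_t}^2\le W_t=O(\mu^4)$, which is the bias claim. For the second moment, use $\norm{u+v}^2\le2\norm u^2+2\norm v^2$. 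The leading term is $\E_t\norm{\widehat g^0_t}^2=(d+2)\widetilde V_t(p_t)$, by the rotation computation already given. The remainder satisfies $\E_t\norm{h_t}^2\le\chi_1W_t=W_t/\varepsilon$, again by Lemma~\ref{lem:fd-update} using $p_i\ge\varepsilon/N$. This gives $2(d+2)\widetilde V_t(p_t)+2W_t/\varepsilon$, which is $O(\mu^4)$ beyond the leading term at fixed $\varepsilon$.

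\textbf{Main obstacle.} The delicate step is the remainder bookkeeping rather than the Gaussian algebra. First, $b_{i,t}$ is correlated with $z_t$, so no independence can be used and every cross term has to go through Cauchy--Schwarz. Second, the inverse weights $1/p_{i,t}$ can blow up the remainder, and only the exploration floor $p_{i,t}\ge\varepsilon/N$ keeps it finite; the $O(\mu^4)$ coefficient therefore depends on $1/\varepsilon$. Finally, the crude factor $2$ is accepted precisely so that the remainder stays $O(\mu^4)$ instead of producing an $O(\mu^2)$ cross term.
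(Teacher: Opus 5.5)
Your proposal is correct and follows essentially the same route as the paper. The paper also conditions on $\mathcal F_t$, uses the exact Gaussian moments of the leading term $s^0_{i,t}=z_t^\top g_{i,t}$ (including the rotation identity giving $(d+2)\widetilde V_t(p_t)$), and adds the Taylor remainder controlled by Eq.~\eqref{eq:fd-basic} and Lemma~\ref{lem:fd-update}. Your explicit remainders match the coefficients the paper indicates: $2\sigma_G a_{i,t}\sqrt{J_{i,t}}+J_{i,t}$ for the variance, and $2W_t/\varepsilon$ from the two-term square inequality for the update.
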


\begin{lemma}[Descent decomposition]\label{thm:outer}
Under Assumption~\ref{ass:smooth}, let $\Psi_T=T^{-1}\sum_t\E\norm{\nabla F(x_t)}^2$.~Then
\begin{equation}\label{eq:outer}
\Psi_T\le\frac{\Delta_F}{\eta T}
+\frac{C_d\eta}{2}\widetilde{\D}_T
+\frac{C_d\eta}{2}\frac{\widetilde{\Reg}_T}{T}+O(\mu^2),
\end{equation}
where $C_d=L_F(d+2)$ and $\Delta_F=F(x_1)-F_*$.
\end{lemma}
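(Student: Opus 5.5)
The plan is the standard smoothness descent argument applied to the importance-weighted update, with the finite-difference bias and second-moment errors from Proposition~\ref{prop:moments} pushed into the $O(\mu^2)$ remainder. By $L_F$-smoothness of $F$ (the descent lemma of \citet[Lemma~1.2.3]{nesterov2004introductory}) applied to $x_{t+1}=x_t-\eta\widehat g_t$,
\[
F(x_{t+1})\le F(x_t)-\eta\ip{\nabla F(x_t)}{\widehat g_t}+\frac{L_F\eta^2}{2}\norm{\widehat g_t}^2 .
\]
Taking $\E_t$ uses two facts. First, $\E_t\widehat g_t=\nabla F(x_t)+b_t$ with $\norm{b_t}^2\le W_t=O(\mu^4)$. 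Second, the second moment is $(d+2)\widetilde V_t(p_t)$ plus a finite-difference correction.

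For the linear term, I bound $-\eta\ip{\nabla F(x_t)}{b_t}\le\eta\norm{\nabla F(x_t)}\sqrt{W_t}$. I keep the full coefficient $-\eta\norm{\nabla F(x_t)}^2$ rather than splitting with Young's inequality, since a split would halve it and cost a factor of two in the leading term. After summation, Cauchy--Schwarz over time and randomness turns $\sum_t\E\norm{\nabla F(x_t)}\sqrt{W_t}$ into $T\sqrt{\Psi_T\overline W_T}$.

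For the quadratic term, I use the exact leading second moment rather than the coarse factor-two bound in Proposition~\ref{prop:moments}. Write $\widehat g_t=\widehat g_t^0+h_t$, where $\widehat g_t^0$ is the ideal directional-derivative estimator with exact moment $(d+2)\widetilde V_t(p_t)$ (Gaussian identities appendix). By Lemma~\ref{lem:fd-update} with $B=1$, $\E_t\norm{h_t}^2\le W_t/\varepsilon$. Cauchy--Schwarz on the cross term then gives
\[
\E_t\norm{\widehat g_t}^2\le M_t+2\sqrt{M_tW_t/\varepsilon}+W_t/\varepsilon,\qquad M_t=(d+2)\widetilde V_t(p_t).
\]
Finally, I split $\widetilde V_t(p_t)=\widetilde V_t^\star+\widetilde\rho_t$, which holds by the definitions of $V_t^\star$ and $\Reg_T$.

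Next I sum over $t=1,\dots,T$, take total expectations, telescope, and use $F(x_{T+1})\ge F_*$. Dividing by $\eta T$ gives
\[
\Psi_T\le\frac{\Delta_F}{\eta T}+\frac{C_d\eta}{2}\Bigl(\widetilde\D_T+\frac{\widetilde\Reg_T}{T}\Bigr)+\sqrt{\Psi_T\overline W_T}+\frac{L_F\eta}{T}\sum_t\E\sqrt{M_tW_t/\varepsilon}+\frac{L_F\eta}{2\varepsilon}\overline W_T .
\]
The middle remainder is handled by Cauchy--Schwarz together with $p_{i,t}\ge\varepsilon/N$, which gives $T^{-1}\sum_t\E M_t\le(d+2)A_T/(N\varepsilon)$. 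This yields the middle term of $\Omega_{\mu,T}$.

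The term $\sqrt{\Psi_T\overline W_T}$ still contains $\Psi_T$, so it must be absorbed. I treat the inequality as a quadratic in $\sqrt{\Psi_T}$: if $\Psi_T\le X+\sqrt{\Psi_T\overline W_T}$, then $\sqrt{\Psi_T}\le\tfrac12\sqrt{\overline W_T}+\sqrt{X+\overline W_T/4}$, so $\Psi_T\le X+O(\sqrt{X\overline W_T}+\overline W_T)$. Bounding $X$ in the cross term by a finite-horizon moment gives the $\sqrt{\widetilde\D_T\overline W_T}$ term of $\Omega_{\mu,T}$, up to the regret contribution, which is finite under Assumption~\ref{ass:smooth}. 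All of these pieces are $O(\mu^2)$ at fixed horizon and parameters. This gives Eq.~\eqref{eq:outer}.

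I expect the main obstacle to be this absorption step: doing it without multiplying the leading terms by a constant and without needing uniform bounds on $\norm{\nabla F(x_t)}$. The finite-moment clause of Assumption~\ref{ass:smooth} is what makes the $O(\mu^2)$ remainder legitimate here.
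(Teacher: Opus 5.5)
Your proposal is correct and follows the paper's proof almost step for step. The shared steps are: the smoothness descent inequality with the full $-\eta\norm{\nabla F(x_t)}^2$ retained; the split $\widehat g_t=\widehat g_t^0+h_t$, where Lemma~\ref{lem:fd-update} gives $\E_t\norm{\widehat g_t}^2\le M_t+2\sqrt{M_tW_t/\varepsilon}+W_t/\varepsilon$; and Cauchy--Schwarz with $p_{i,t}\ge\varepsilon/N$ for the mixed remainder.

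You diverge only at the step you flagged as the main obstacle, and the paper shows it is not one. Since $\nabla F(x_t)=N^{-1}\sum_i g_{i,t}$, the triangle inequality gives $\norm{\nabla F(x_t)}\le\bar a_t$ pointwise, so $\Psi_T\le\widetilde{\D}_T$. The bias term is then bounded directly by $\sqrt{\Psi_T\overline W_T}\le\sqrt{\widetilde{\D}_T\overline W_T}$, which is the first term of $\Omega_{\mu,T}$. No absorption is needed, and no control of $\norm{\nabla F(x_t)}$ beyond a quantity already on the right-hand side.

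Your quadratic absorption is also valid and keeps the leading coefficient equal to one, so it does prove the lemma at a fixed horizon. However, its remainder $\sqrt{X\overline W_T}$ contains $\sqrt{\Delta_F\overline W_T/(\eta T)}$ and a regret-dependent piece. It therefore does not literally reproduce $\Omega_{\mu,T}$ as you assert. The paper's one-line bound yields exactly the stated remainder, expressed in quantities that already appear in the bound.
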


\paragraph{Schedules and finite-difference accuracy.}
The schedules in Theorem~\ref{cor:rate} are constant within each run and indexed by the horizon. The leading-term choice of $c_\eta$ requires $\D_T$ and $\Delta_F$; since $\D_T$ depends on the resulting trajectory, this is a conditional tuning result. The finite-time bounds retain the trajectory moments without imposing a global gradient bound, convexity, or a strong-growth condition. For fixed problem and schedule coefficients and uniformly bounded trajectory moments, the $O(\mu^2)$ constant can be taken uniformly over feasible horizons. A run-wise radius $\mu=\mu_0T^{-\alpha}$ with $\alpha>3/10$ makes this correction $o(T^{-3/5})$; a fixed radius retains an additive error floor. Appendix~\ref{app:fd} gives the explicit bounds.

%% file: sections/experiment_details.tex
\subsection{Interpretable sample-level structure}\label{app:additional-evidence}\label{app:sample-cases}
Figures~\ref{fig:importance-dynamics} and~\ref{fig:sample-importance-examples} connect population-level changes in sample importance to the model's predictions on individual inputs. Importance is measured by the normalized exact gradient norm $N\norm{\nabla f_i(x_t)}/\sum_j\norm{\nabla f_j(x_t)}$ at frozen RoBERTa-large MeZO checkpoints; a value of 1 corresponds to uniform allocation. These are diagnostic oracle quantities, rather than SAZO's estimated scores. Detailed audit settings are given in Appendix~\ref{app:heterogeneity}.

\paragraph{Sample rankings change during training.}
Figure~\ref{fig:importance-dynamics} compares each example's importance at initialization and after 20K updates on SST-2 and TREC. Points above the diagonal gain relative importance, while those below lose it. The initial-to-20K Spearman correlations are 0.52 and 0.12, respectively, showing that the initial ranking does not reliably preserve subsequent allocation priorities.

\begin{figure}[H]
\centering
\includegraphics[width=.78\linewidth]{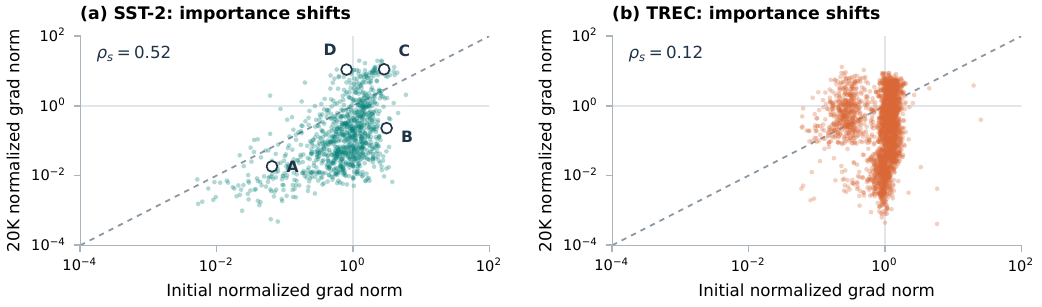}
\caption{\textbf{Sample-importance rankings evolve during training.} Initial versus 20K normalized gradient norms $N\norm{\nabla f_i(x_t)}/\sum_j\norm{\nabla f_j(x_t)}$ on SST-2 and TREC, with one point per example and shared logarithmic axes. The diagonal marks unchanged importance, and $\rho_s$ is the Spearman correlation. A--D identify the SST-2 examples examined in Figure~\ref{fig:sample-importance-examples}. All gradient norms are exact audits at frozen MeZO checkpoints.}\label{fig:importance-dynamics}
\end{figure}

\paragraph{Importance changes have interpretable sample-level structure.}
The four SST-2 cases A--D in Figure~\ref{fig:sample-importance-examples} link the scatter plot to predictions at initialization, 5K, and 20K updates. A (``a great cast'') stays correct and has low importance throughout. B (``over-dramatic at times'') becomes correct as its normalized importance falls from 3.113 to 0.230. C (``smug grin'') remains incorrect, while its importance rises from 2.863 to 11.170. D, containing both ``amateurish'' and ``a minor treat,'' becomes incorrect and rises from 0.805 to 10.930. For D, the absolute gradient norm also increases, so the rise is not solely due to other examples' shrinking gradients.

\begin{figure}[H]
\centering
\includegraphics[width=.78\linewidth]{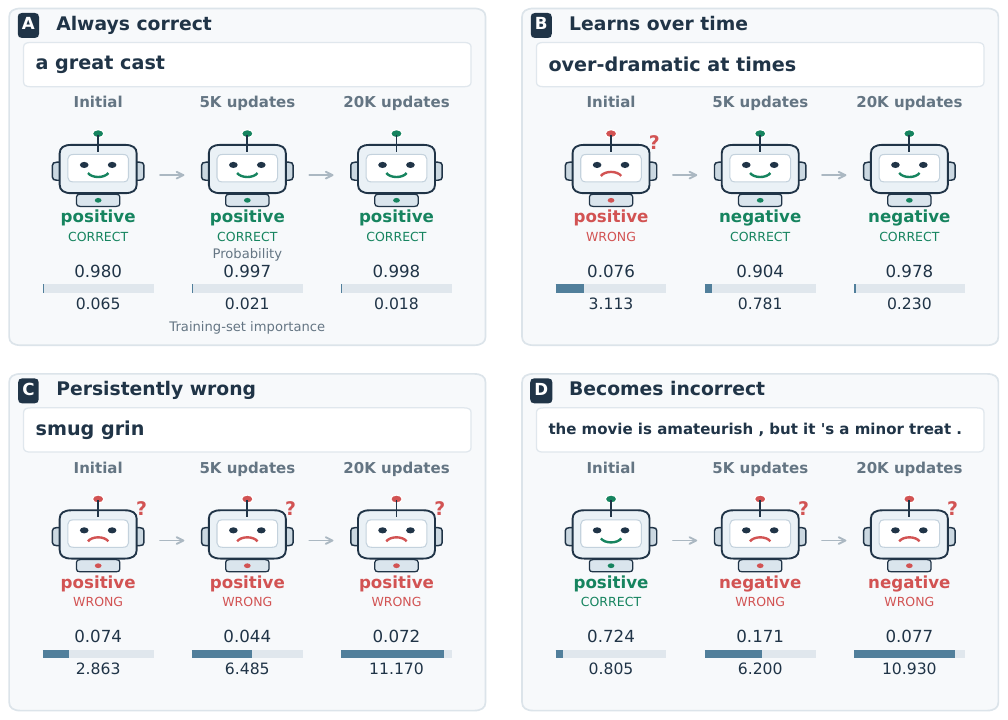}
\caption{\textbf{Representative examples reveal interpretable importance changes.} A--D correspond to the SST-2 points marked in Figure~\ref{fig:importance-dynamics}. Robot lights and status labels indicate correctness at each checkpoint. Numbers above the blue bars give the correct-class probability $\exp(-f_i(x_t))$; numbers below give normalized gradient norms. Bars share a linear scale from 0 to 12, with 1 corresponding to uniform allocation. All gradient norms are exact audits at frozen MeZO checkpoints. The input template appends \texttt{It was <mask>.} Appendix~\ref{app:heterogeneity} provides detailed settings and sample~identities.}\label{fig:sample-importance-examples}
\end{figure}

Together, the population shifts and these illustrative cases show that sample importance reflects the model's evolving response to an input. Words, length, and labels remain fixed, yet predictions and gradient contributions change. An allocation based only on initial importance can therefore become stale, motivating continued tracking during~optimization.
\FloatBarrier